\documentclass{article} %
\usepackage{template_conference}

\usepackage{microtype}
\usepackage{hyperref}
\usepackage{url}
\usepackage{booktabs}

\title{Asymptotics of Language Model Alignment }

\author{%
  Joy Qiping Yang \\ University of Sydney\\
                    Sydney, Australia\\     \texttt{qyan6238@uni.sydney.edu.au}\\
 \And
  Salman Salamatian\\
 Massachusetts Institute of Technology\\
                    Cambridge, MA, USA\\    \texttt{salmansa@mit.edu}\\
\And
 Ziteng Sun, Ananda Theertha Suresh, Ahmad Beirami\\
  Google Research\\
                    New York, NY, USA\\    
                    \texttt{\{zitengsun, theertha, beirami\}@google.com}
}

\usepackage{bm}
\usepackage{comment}
\usepackage{xcolor}
\usepackage{amsfonts}
\usepackage[utf8]{inputenc} 
\usepackage[T1]{fontenc}
\usepackage{url}
\usepackage{ifthen}
\usepackage{cite}
\usepackage{graphicx}
\usepackage[cmex10]{amsmath}
\usepackage{hyperref}
\usepackage{url}

\newcommand{\ssedit}[1]{{\noindent \textcolor{black}{#1}}}

\definecolor{ruddy}{rgb}{1.0, 0.0, 0.16}
\definecolor{gblue}{RGB}{29, 144, 255}
\definecolor{royalblue}{rgb}{0.25, 0.41, 0.88}
\definecolor{edits}{rgb}{1.0,0.0,0.0}

\hypersetup{
  colorlinks,
  citecolor=royalblue,
  linkcolor=royalblue,
  urlcolor=royalblue}

\usepackage{bm}
\usepackage{hyperref}
\usepackage[capitalise]{cleveref}
\usepackage{thm-restate}
\usepackage{amsthm}
\usepackage{amssymb}
\newtheorem{definition}{Definition}
\newtheorem{theorem}{Theorem}
\newtheorem{lemma}{Lemma}
\newtheorem{corollary}{Corollary}
\newtheorem{assumption}{Assumption}
\newtheorem{example}{Example}

\newcommand{\bx}{\bm{x}}
\newcommand{\by}{\bm{y}}
\newcommand{\bz}{\bm{z}}
\newcommand{\bmu}{\bm{p}}
\newcommand{\bpi}{\bm{\pi}}
\newcommand{\bomega}{\bm{\omega}}
\newcommand{\bphi}{\bm{\phi}}
\newcommand{\bpsi}{\bm{\psi}}
\newcommand{\cx}{\mathcal{X}}
\newcommand{\cy}{\mathcal{Y}}
\newcommand{\bp}{\bm{p}}
\newcommand{\bq}{\bm{q}}
\newcommand{\br}{\bm{r}}

\newcommand{\ahmad}[1]{{\noindent \textit{\small\textcolor{blue}{ahmad: #1}}}}

\newcommand{\E}{\mathbb{E}}
\newcommand{\Type}{t}
\newcommand{\KL}{D_{\operatorname{KL}}}
\renewcommand{\S}{\mc{S}^K_\zeta}
\newcommand{\forfuture}[1]{
\iffalse
{#1}
\fi
}

\hyphenation{op-tical net-works semi-conduc-tor}

\newcommand{\mc}{\mathcal}

\colmfinalcopy %
\begin{document}

\maketitle

\begin{abstract}
Let $\bp$ denote a reference generative language model. Let $\br$ denote a reward model that returns a scalar to capture the degree at which a draw from $\bp$ is preferred. 
The goal of {\em language model alignment} is to  alter $\bp$ to a new distribution  $\bphi$ that results in a higher expected reward while keeping $\bphi$ close to $\bp.$ A popular alignment method is the {\em KL-constrained reinforcement learning (RL)}, which chooses a distribution $\bphi_\Delta$ that maximizes $E_{\bphi_{\Delta}} \br(\by)$ subject to a relative entropy constraint $\KL(\bphi_\Delta \| \bp) \leq \Delta.$ Another simple alignment method is {\em best-of-$N$}, where $N$ samples are drawn from $\bp$ and one with highest reward is selected. %
In this paper, we offer a closed-form characterization of the optimal KL-constrained RL solution. We then demonstrate that any alignment method that achieves a comparable trade-off between KL divergence and expected reward must approximate the optimal KL-constrained RL solution in terms of relative entropy.
To further analyze the properties of alignment methods, we introduce two simplifying assumptions: we let the language model be memoryless, and the reward model be linear.
Although these assumptions may not reflect complex real-world scenarios, they enable a precise characterization of the asymptotic (in the sequence length) behavior of both the best-of-$N$ alignment, and the KL-constrained RL method, in terms of information-theoretic quantities.
We prove that the reward of the optimal KL-constrained RL solution satisfies a large deviation principle, and we fully characterize its rate function. We also show that the rate of growth of the scaled cumulants of the reward is characterized by a proper R\'enyi cross entropy. Finally, we show that best-of-$N$ is asymptotically equivalent to KL-constrained RL solution by proving that their expected rewards are asymptotically equal, and concluding that the two distributions must be close in KL divergence.
\end{abstract}

\section{Introduction}

Generative models, such as large language models~\citep{brown2020language, team2023gemini,touvron2023llama} and diffusion models~\citep{sohl2015deep, ho2020denoising, song2020score}, are increasingly popular general-purpose problem solvers.
These models are generally trained on large corpora of unlabeled data; with additional training on domain-specific data through supervised finetuning (SFT). We refer to the resulting model as the {\em reference model}. The reference model, while capable of generating realistic outputs, may still generate undesirable outcomes, e.g., unsafe language or images. Hence, it is desirable to further {\em align} the output of the reference model with a reward that captures human preferences~\citep{christiano2017deep,stiennon2020learning,ouyang2022training, bai2022training}.

The literature has seen a proliferation of algorithmic techniques for alignment~\citep{christiano2017deep,stiennon2020learning,zhao2022calibrating, ouyang2022training, bai2022training,rafailov2023direct, mudgal2023controlled,yang-klein-2021-fudge}.
Two of the popular methods are: (a) {\em KL-constrained reinforcement learning (RL), $\bphi_\Delta$,}~\citep{christiano2017deep,ouyang2022training}, which maximizes expected reward subject to a KL divergence constraint, $\Delta,$ between the aligned model and the reference model; and (b) {\em Best-of-$N$, $\bpi_N$,}~\citep{nakano2021webgpt,touvron2023llama}, 
where $N$ i.i.d. samples are drawn from the reference model and one with the highest reward is returned. 

Despite the popularity of both of these alignment methods, no relationship between them is known. 
In particular, best-of-$N$ has shown to be surprisingly competitive with, or even outperform more involved KL-constrained RL algorithms that aim at achieving the best reward-KL tradeoffs~\citep{mudgal2023controlled, gao2023scaling,eisenstein2023helping}.
In this work, we provide theoretical reasoning for this empirical observation by showing that the two methods are in fact asymptotically equivalent in some specific settings. We hope our work initiates further study into the relationship between different alignment methods. The summary of our contributions may be found below:

\begin{itemize}
    \item We characterize the unique optimal solution of the KL-constrained reinforcement learning problem, denoted by $\bphi_\Delta,$ for a given KL constraint of $\Delta$. We show that the solution is a {\em mismatched tilted distribution} that had appeared and characterized by~\citet{salamatian2019mismatched} in the context of mismatched guesswork.

    \item We show that if an alignment method $\bomega$ satisfies a KL constraint $\Delta$, and also achieves {\em almost} the same expected reward as $\bphi_\Delta$ (the optimal KL-constrained RL solution), then $\KL(\bomega \| \bphi_\Delta)$ must be vanishingly small, where $\KL(\cdot \| \cdot)$ denotes the KL divergence and is formally defined in~\eqref{eq:define-KL-H}.

    \item We show that the cumulants of the reward are related to the R\'enyi cross entropy of the alignment distribution to the source distribution.
    Under assumptions on the data generating process, we also show that the optimal KL-constrained RL solution, $\bphi_\Delta,$ satisfies a large deviation principle, where we characterize its rate function using information-theoretic quantities. 

    \item Under assumptions on the data-generating process, we show that the outcomes of the best-of-$N$ alignment method, $\bpi_N,$ are similar to those of the optimal KL-constrained RL, $\bphi_\Delta,$ for $N = \exp(\Delta)$. In particular, their expected rewards are also close, which implies that KL-constrained RL method is close to the best-of-$N$ method in KL divergence, i.e., $\KL(\bpi_N \| \bphi_\Delta)$ is vanishingly small.
\end{itemize}

\section{Problem Setup}
Let $\bx \in \bm{\mc{X}}$ denote an input prompt, e.g., {\em Who is Barack Obama?} 
Let $\bm{\mc{Y}}$ be the set of possible outputs of a generative language model. A language model $\bp_{\bm{y}|\bm{x}}$ assigns probability distributions over $\bm{\mc{Y}}$ given a context (input prompt) $\bx \in \bm{\mc{X}}$ i.e., $\bp_{\bm{y}|\bm{x}} : \bm{\mc{X}} \times \bm{\mc{Y}} \to \mathbb{R}^{\geq0}$ such that $\forall \bx$
\[
\sum_{\bm{y} \in \bm{\mc{Y}}} \bp_{\bm{y}|\bm{x}}(\by | \bx) = 1.
\]
Throughout this paper, we use $\bp$ to denote the reference language model which we wish to further align.

A reward model $\br$ assigns scalar scores for every prompt and output i.e., $\br : \bm{\mc{X}} \times \bm{\mc{Y}} \to \mathbb{R}$. Given such a reward model, one can also define an alignment distribution as follows.

\begin{definition}[alignment distribution]
\label{definition:alignment_distribution}
  For a given reward function $\br$, we denote $\bq_{\by|\bx}$ as the corresponding alignment distribution defined as
\[
\bq_{\by|\bx} (\by | \bx) = \exp \left(  \br(\bx, \by )\right)/ R_{\bx},
\]
where $R_{\bx}$ is the partition function, i.e., $R_{\bx} := \sum_{\by \in \bm{\mc{Y}}}\exp \left(  \br(\bx, \by )\right)$. Without loss of generality, and for convenience in the rest of this paper we assume that $R_{\bx} = 1,$ i.e., reward is the negative log likelihood of a generative distribution.
\end{definition}
For two distributions $\bp$ and $\bq$, let $H(\bp \| \bq)$ denote the cross-entropy of $\bp$ to $\bq$; and $\KL(\bp \| \bq)$ denote the Kullback–Leibler divergence. For discrete distributions $\bp$ and $\bq$ over the same alphabet $\bm{\mc{Y}}$, we have that\footnote{All logarithms in this paper are in base $e.$} 
\begin{equation}
    H(\bp \| \bq) = \sum_{\by \in \bm{\mc{Y}}} p(\by) \log \frac{1}{q(\by)}; \quad\quad \KL(\bp \| \bq) = \sum_{{\by \in \bm{\mc{Y}}}} p(\by) \log \frac{p(\by)}{q(\by)}.
\label{eq:define-KL-H}
\end{equation}

With these definitions, the KL-constrained RL problem is defined as follows:
\begin{definition}[KL-constrained RL]
\label{definition:RL_alignment}
    We say $\bphi_\Delta$ is an optimal aligned model of $\bm{p}$ with KL divergence $\Delta,$ if
    \begin{equation}
        \bphi_\Delta (\cdot | \bx) \in \arg\min_{\bphi \in \mc{D}_{\Delta}(\bp)} H(\bphi (\cdot | \bx) \| \bq(\cdot | \bx)),
    \end{equation}
where $\mc{D}_\Delta(\bp) := \{ \bphi :  \KL(\bphi(\cdot | \bx) \| \bp(\cdot | \bx)) \leq \Delta \}$.
\end{definition}

Under the KL-constrained RL method, outputs are then generated using $\bm{\phi}_\Delta$. 
We remark that 
\begin{align*}
 H(\bphi (\cdot | \bx) \| 
 \bq (\cdot | \bx)) 
& =  - \E_{\by \sim \bphi (\cdot | \bx)}  \log \bq_{\by|\bx}(\by|\bx) = - \E_{\by \sim \bphi
 (\cdot | \bx)} \br(\bx, \by),
\end{align*}
and hence minimizing  the cross entropy $H(\bphi(\cdot | \bx) \| \bq (\cdot | \bx))$  is equivalent to maximizing expected reward $\E_{\by \sim \bphi
 (\cdot | \bx)} \br(\bx, \by)$, which is the common terminology in the language model alignment literature~\citep{gao2023scaling,eisenstein2023helping,mudgal2023controlled}. 

Next, we describe the best-of-$N$ alignment method. 
\begin{definition}[best-of-$N$]
 Let $\by_1, \ldots, \by_N$ be $N$ i.i.d. draws from $\bp_{\by | \bx}(\cdot | \bx)$. The best-of-$N$ alignment policy returns\footnote{We define $[N] := \{1, \ldots, N\}$.}
\begin{equation}
    \by = \by_{k^*}  \quad\quad \text{where}\quad \quad k^* := \arg\max_{k \in [N]}  \bq_{\by | \bx}(\by_k |\bx).
\end{equation}
Further, let $\bpi_N$ denote the probability distribution of the best-of-$N$ strategy. It is clear that $\bpi_1 = \bp$. 
\end{definition}
The best-of-$N$ alignment method generates $N$ candidate outputs directly from the reference model and selects the one with highest reward. Further, its PMF is characterized by~\citet[Lemma 1]{beirami2024theoretical}.

In empirical studies~\citep{gao2023scaling, mudgal2023controlled, eisenstein2023helping, rafailov2023direct}, in order to compare a family of decoding policy $\{\bpsi_\beta\}$ parameterized by $\beta$, it is common to report tradeoff curves of {\em expected reward} $E_{\bx} E_{\bpsi_\beta} \{\br(\bx,\by)\}$ vs {\em KL divergence} $E_{\bx} \KL(\bpsi_\beta(\cdot | \bx) \| \bp(\cdot | \bx)),$ which we refer to as {\em reward-KL tradeoff curves} (e.g.,~\citep[Fig. 3]{mudgal2023controlled}). A decoding policy is desired when its reward-KL tradeoff curve dominates that of any other decoding policy. 
The optimal aligned model (Definition~\ref{definition:RL_alignment}) must dominate any other alignment technique by definition. However, in practice best-of-$N$ achieves remarkably good reward-KL tradeoffs and often dominates those of sophisticated methods solving the RL problem in practical scenarios~\citep{mudgal2023controlled, gao2023scaling}. One of our major objectives in this paper is to develop a theoretical understanding for this phenomenon.

\section{Optimal KL-Constrained RL Solution}

Our first goal is to understand the precise theoretical behavior of the reward under the optimal KL-constrained RL. Our second goal is to study the relationship between the optimal KL-constrained RL and best-of-$N$ methods. We want to answer the questions: {\em does best-of-$N$ method produce good or even optimal solutions to the KL-constrained RL problem? Is there any theoretical support for the good empirical performance of best-of-$N$?} 

First, we characterize the solution to the KL-constrained RL problem.
\begin{lemma}[optimal aligned model]
    \label{lemma:solution_to_KL_RL}
   The solution of the KL-constrained RL problem in Definition~\ref{definition:RL_alignment} is unique and is given by
    \begin{equation}        \bphi_\Delta(\cdot | \bx) = T(\bq(\cdot | \bx), \bp(\cdot | \bx), \alpha(\Delta)),
    \label{eq:mismatched-tilt}
    \end{equation}
    where $T(\bq(\cdot | \bx), \bp(\cdot | \bx), \alpha)$ is the mismatched tilt~\citep[Definition 1]{salamatian2019mismatched}:
 \begin{equation}
     T(\bq(\cdot | \bx), \bp(\cdot | \bx), \alpha)(\by) := \frac{\bp(\by | \bx)\bq^{\alpha}(\by | \bx)  }{Z_{\bx, \alpha}},
 \end{equation}   
    where 
    \begin{equation}
        Z_{\bx, \alpha} :=\sum_{\bz \in \bm{\mathcal{Y}}} \bp(\bz | \bx)\bq^{\alpha}(\bz | \bx),
    \end{equation}
    and
    \begin{equation}
        \alpha(\Delta) := \arg_{\alpha \in \mathbb{R}^{\geq 0}} \left\{\KL(T(\bq(\cdot | \bx), \bp(\cdot | \bx), \alpha) \| \bp) = \Delta \right\}.
        \label{eq:alpha-Delta}
    \end{equation}
\end{lemma}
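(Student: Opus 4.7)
The plan is to recognize this as a strictly convex optimization problem (linear objective in $\bphi$, strictly convex KL-ball constraint), and to solve it via Lagrangian duality, then verify that the KL constraint can always be tuned to the desired level $\Delta$ by varying the tilt parameter.

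First, I would fix the prompt $\bx$ and drop it from the notation. Since $\bq(\by)=\exp(\br(\by))$ with partition function $1$, the objective $H(\bphi\|\bq)=-\sum_\by \bphi(\by)\log\bq(\by)$ is linear in $\bphi$. The feasible set $\mc{D}_\Delta(\bp)\cap\{\text{simplex}\}$ is convex (sublevel set of the strictly convex KL), nonempty (contains $\bp$), and compact, so a minimizer exists. For Slater's condition, take $\bphi=\bp$ so that $\KL(\bphi\|\bp)=0<\Delta$ (assuming the nontrivial case $\Delta>0$). Thus strong duality applies and one can equivalently minimize the Lagrangian
\begin{equation*}
L(\bphi,\lambda) \;=\; -\sum_\by \bphi(\by)\log\bq(\by) \;+\; \lambda\bigl[\KL(\bphi\|\bp)-\Delta\bigr]
\end{equation*}
over the simplex, for the optimal $\lambda\ge 0$. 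For any $\lambda>0$, $L(\cdot,\lambda)$ is strictly convex in $\bphi$, so its constrained minimizer is unique.

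Second, I would compute the stationarity condition. Introducing a multiplier $\mu$ for $\sum_\by \bphi(\by)=1$ and differentiating gives
\begin{equation*}
-\log\bq(\by) + \lambda\bigl(\log \bphi(\by) - \log \bp(\by) + 1\bigr) + \mu \;=\; 0,
\end{equation*}
which rearranges to $\bphi(\by)\propto \bp(\by)\bq(\by)^{1/\lambda}$. Setting $\alpha:=1/\lambda$ and absorbing $\mu$ into the normalizer $Z_\alpha=\sum_{\bz}\bp(\bz)\bq(\bz)^\alpha$ yields exactly the mismatched tilt $T(\bq,\bp,\alpha)$ of the lemma. It remains to identify the correct $\alpha$.

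Third, I would show that the map $g(\alpha):=\KL\bigl(T(\bq,\bp,\alpha)\,\|\,\bp\bigr)$ is continuous and strictly increasing on $[0,\infty)$ (except in the trivial case when $\bq$ is constant on the support of $\bp$), so that $g^{-1}(\Delta)$ exists and is unique for every $\Delta$ in the range of $g$. A direct computation gives $g(\alpha)=\alpha\,E_{\bphi_\alpha}[\log\bq]-\log Z_\alpha$, and using $\frac{d}{d\alpha}\log Z_\alpha = E_{\bphi_\alpha}[\log\bq]$ (since $\{\bphi_\alpha\}$ is an exponential family tilted from $\bp$ with sufficient statistic $\log\bq$), one obtains
\begin{equation*}
g'(\alpha) \;=\; \alpha\,\cdot\,\mathrm{Var}_{\bphi_\alpha}[\log \bq] \;\ge\; 0,
\end{equation*}
with strict positivity for $\alpha>0$ whenever $\bq$ is not $\bphi_\alpha$-a.s.\ constant. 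Also $g(0)=\KL(\bp\|\bp)=0$, and $g(\alpha)\to \log(1/\bp(\by^\ast))$ as $\alpha\to\infty$, where $\by^\ast=\arg\max_\by\bq(\by)$. Combining these establishes existence and uniqueness of $\alpha(\Delta)$ in~\eqref{eq:alpha-Delta}; strong duality then identifies the constrained optimum with $T(\bq,\bp,\alpha(\Delta))$, and uniqueness of the overall optimum follows from strict convexity of the Lagrangian.

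The main obstacle I anticipate is the monotonicity step: one must rule out degenerate cases (e.g., $\bq$ constant on $\mathrm{supp}(\bp)$, where the problem is trivial) and handle the boundary $\Delta\ge \log(1/\bp(\by^\ast))$ cleanly, where the constrained optimum degenerates to the point mass at the reward-maximizer and is obtained as the $\alpha\to\infty$ limit of the tilt. Everything else reduces to standard convex-duality bookkeeping.
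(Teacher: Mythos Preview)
Your proposal is correct and follows essentially the same Lagrangian/KKT route as the paper: the paper's appendix proof also fixes $\lambda>0$ and minimizes the Lagrangian, the only cosmetic difference being that it obtains the minimizer by algebraically completing $H(\bphi\|\bq)+\lambda\KL(\bphi\|\bp)$ into $\lambda\,\KL(\bphi\|T(\bq,\bp,1/\lambda))+C(\lambda,\Delta)$ rather than by differentiating. Your treatment is in fact more thorough than the paper's, since you supply the monotonicity argument $g'(\alpha)=\alpha\,\mathrm{Var}_{\bphi_\alpha}[\log\bq]$ establishing existence and uniqueness of $\alpha(\Delta)$, and you flag the degenerate and boundary cases---points the paper's proof leaves implicit.
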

\begin{proof}[Proof sketch]
This follows from applying the Karush–Kuhn–Tucker (KKT) conditions on the convex problem in Definition~\ref{definition:RL_alignment}.
\end{proof}

A variant of this result has already appeared in~\citep{korbak2022reinforcement, korbak2022rl} and we provide a proof in the appendix for completeness. 
The result above suggests that the aligned models (parameterized by $\Delta$) form an exponential family of distributions,\footnote{This specific family of distribution also arises in characterization of other elements in information theory, e.g. in mismatched guesswork and mismatched one-to-one source coding~\citep[Definition 1]{salamatian2019mismatched}, and is referred there as the mismatched tilted family of distributions.} and we define this family of solutions to the KL-constrained RL for different values of $\Delta$ as the aligned family.
\begin{definition}[aligned family]
    Let the aligned family of $\bp$ towards $\bq$ be denoted by $\mc{T}(\bp, \bq)$, and be given by
    \begin{equation}
        \mc{T}(\bp, \bq) = \{\bphi_\Delta\}_{\Delta \geq 0}.
    \end{equation}
\end{definition}

Equipped with the optimal aligned model for a given KL divergence constraint, we ask how close a model would be to optimal, if it offered almost the same expected reward as the optimal model.
\begin{lemma}[closeness of models given closeness of expected rewards]
    Let $\bpsi_\Delta$ be such that 
    \begin{equation}
     \KL(\bpsi_\Delta \| \bp) \leq \Delta; \quad\quad   H(\bpsi_\Delta \|\bq) \leq H(\bphi_\Delta \|\bq) + \epsilon,
    \label{eq:lemma-closeness-assumptions}
    \end{equation}
where $\bphi_\Delta$ is the optimal aligned model at KL divergence $\Delta$.
Then, 
\begin{equation}
    \KL(\bpsi_\Delta \| \bphi_\Delta) \leq \alpha(\Delta) \epsilon,
\end{equation}
\label{lem:almost-optimal-aligned}
where $\alpha(\Delta)$ is defined in~\eqref{eq:alpha-Delta}.
\end{lemma}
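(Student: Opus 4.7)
My plan would be to exploit the closed form for $\bphi_\Delta$ from Lemma~\ref{lemma:solution_to_KL_RL} to derive a Pythagorean-style identity for $\KL(\bpsi_\Delta \| \bphi_\Delta)$ and then bound the two resulting summands directly using the two hypotheses in~\eqref{eq:lemma-closeness-assumptions}. Concretely, since $\bphi_\Delta$ is the mismatched tilt at parameter $\alpha := \alpha(\Delta)$, taking logarithms of~\eqref{eq:mismatched-tilt} gives the pointwise identity
\[
\log \bphi_\Delta(\by|\bx) \;=\; \log \bp(\by|\bx) + \alpha \log \bq(\by|\bx) - \log Z_{\bx,\alpha}.
\]

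Taking expectation under $\bpsi_\Delta$ of $\log(\bpsi_\Delta/\bphi_\Delta)$ and recognising the cross entropies and KL divergence in the result would yield
\[
\KL(\bpsi_\Delta \| \bphi_\Delta) \;=\; \KL(\bpsi_\Delta \| \bp) + \alpha\, H(\bpsi_\Delta \| \bq) + \log Z_{\bx,\alpha}.
\]
Specialising the same identity to $\bpsi_\Delta = \bphi_\Delta$ makes the left-hand side vanish, and because the KKT conditions underlying Lemma~\ref{lemma:solution_to_KL_RL} force the KL constraint to bind at the optimum (so $\KL(\bphi_\Delta\|\bp) = \Delta$), this determines the partition function through $\log Z_{\bx,\alpha} = -\Delta - \alpha H(\bphi_\Delta \| \bq)$. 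Substituting back eliminates $\log Z_{\bx,\alpha}$ and gives the clean form
\[
\KL(\bpsi_\Delta \| \bphi_\Delta) \;=\; \bigl[\KL(\bpsi_\Delta \| \bp) - \Delta\bigr] + \alpha(\Delta)\bigl[H(\bpsi_\Delta \| \bq) - H(\bphi_\Delta \| \bq)\bigr].
\]

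From here the conclusion drops out: the first bracket is nonpositive by the KL constraint on $\bpsi_\Delta$, while the second bracket is at most $\epsilon$ by the second hypothesis, and multiplying through by the nonnegative scalar $\alpha(\Delta)$ yields the desired bound $\alpha(\Delta)\epsilon$. I do not expect any real obstacle; the whole argument is essentially the Pythagorean theorem for the exponential family $\mc{T}(\bp,\bq)$ centred at $\bphi_\Delta$, and is entirely linear once the closed form is in hand. The one subtlety worth a line in the final write-up is justifying that the KL constraint binds at $\bphi_\Delta$ and that the Lagrange multiplier $\alpha(\Delta)$ is nonnegative, both of which fall out of the same KKT analysis already used to establish Lemma~\ref{lemma:solution_to_KL_RL}, since the reward--KL trade-off is strictly monotone for $\Delta$ below its unconstrained maximum.
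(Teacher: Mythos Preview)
Your proposal is correct and essentially identical to the paper's proof: both expand $\KL(\bpsi_\Delta\|\bphi_\Delta)$ via the tilt formula for $\bphi_\Delta$, cancel the partition function by subtracting the same identity evaluated at $\bphi_\Delta$, and then bound the two resulting terms using the two hypotheses and the fact that the KL constraint binds at the optimum with $\alpha(\Delta)\geq 0$. The only cosmetic difference is that the paper writes the subtraction as $\KL(\bpsi_\Delta\|\bphi_\Delta)-\KL(\bphi_\Delta\|\bphi_\Delta)$ rather than first solving for $\log Z_{\bx,\alpha}$.
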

\begin{proof} Note that
\begin{align}
    \KL(\bpsi_\Delta(\cdot | \bx) \| \bphi_\Delta(\cdot| \bx)) & =   \KL(\bpsi_\Delta(\cdot | \bx) \| \bphi_\Delta(\cdot| \bx)) - \KL(\bphi_\Delta(\cdot| \bx) \| \bphi_\Delta(\cdot| \bx)) \nonumber\\ &= \mathbb{E}_{\by \sim \bpsi_\Delta(\cdot | \bx)} \log \frac{\bpsi_\Delta(\by| \bx)}{\bp(\by| \bx)} + \alpha(\Delta) \mathbb{E}_{\by \sim \bpsi_\Delta(\cdot| \bx) } \log \frac{1}{\bq(\by| \bx)} + \log Z_{\bx, \alpha(\Delta)}\nonumber\\
    & - \mathbb{E}_{\by \sim \bphi_\Delta(\cdot | \bx)} \log \frac{\bphi_\Delta(\by| \bx)}{\bp(\by| \bx)} - \alpha(\Delta) \mathbb{E}_{\by \sim \bphi_\Delta(\cdot| \bx) } \log \frac{1}{\bq(\by| \bx)} - \log Z_{\bx, \alpha(\Delta)}\label{eq:step1} \\
    & = \KL(\bpsi_\Delta(\cdot | \bx) \| \bp(\cdot \| \bx) ) - \KL(\bphi_\Delta(\cdot| \bx)\| \bp(\cdot \| \bx) )\nonumber\\
    & + \alpha(\Delta) \left( H(\bpsi_\Delta \|\bq) - H(\bphi_\Delta \|\bq) \right) \nonumber\\
    & \leq \alpha(\Delta) \epsilon, \label{eq:step2}
\end{align}
where~\eqref{eq:step1} follows from the fact that $\bphi_\Delta$ could be expressed by~\eqref{eq:mismatched-tilt} in Lemma~\ref{lemma:solution_to_KL_RL}, and~\eqref{eq:step2}
follows from the assumptions of the lemma in~\eqref{eq:lemma-closeness-assumptions}, which completes the proof.
\end{proof}
Lemma~\ref{lem:almost-optimal-aligned} states that any model that satisfies KL constraint $\Delta,$ and whose expected reward is within $\epsilon$ of that of the optimal aligned model must be close to the optimal aligned model in KL divergence. We will use this result to establish such closeness between best-of-$N$ and the optimal KL-constrained model in the sequel.

Before we proceed, we would like to also observe a connection between the cumulants of the reward and R\'enyi cross entropy.
\begin{definition}[R\'enyi cross entropy]
For all $t> 0,$ let R\'enyi cross entropy of order $t$ be defined as\footnote{The definition is extended via continuous extension at $t = 1.$}~\citep[Eq. (38)]{li2023tilted}
\begin{equation}
    H_t (\bp \| \bq) := \frac{1}{1-t} \log \sum_{\by \in \bm{\mathcal{Y}}} \bp(\by) \bq(\by)^{t -1}.
\end{equation}    
\end{definition}
R\'enyi cross entropy of order $1$ recovers the cross entropy, i.e., $H_1 (\bp \| \bq) = H (\bp \| \bq).$ 

\begin{lemma}[cumulant generating function of reward]
    Let $ E_{\Delta, \rho}(\bp, \bq)$ be the $\rho$-th cumulant of the reward under the optimal $\Delta$-aligned model, and for all $\rho \geq 0, $be defined as\footnote{For $\rho = 0,$ the definition is extended via continuous extension.}
\begin{equation}
    E_{\Delta, \rho}(\bp, \bq) :=  
    \frac{1}{\rho} \log \mathbb{E}_{\by \in \bphi_{\Delta} (\cdot|\bx)} \bq_{\by|\bx} (\by |\bx)^\rho = \frac{1}{\rho} \log \mathbb{E}_{\by \in \bphi_{\Delta} (\cdot|\bx)} \exp ( \rho \, \br (\bx, \by )),
\end{equation}
where the latter equality is due to the fact that we assumed $R_{\bx} = 1.$
Then,
\begin{equation}
    E_{\Delta, \rho}(\bp, \bq) = - H_{1+\rho}(\bphi_\Delta \| \bq).
\end{equation}
\label{lem:cumulant-general}
\end{lemma}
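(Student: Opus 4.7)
The plan is to verify the identity by unfolding the two definitions and checking they coincide; there is essentially no nontrivial content beyond bookkeeping. First, I would rewrite the expectation in the definition of $E_{\Delta,\rho}(\bp,\bq)$ as an explicit sum over $\bm{\mathcal{Y}}$, obtaining
\[
E_{\Delta, \rho}(\bp, \bq) = \frac{1}{\rho} \log \sum_{\by \in \bm{\mathcal{Y}}} \bphi_\Delta(\by|\bx)\, \bq(\by|\bx)^{\rho}.
\]
Then I would instantiate the definition of R\'enyi cross entropy at order $t = 1+\rho$, noting that $1-t = -\rho$ and $t-1 = \rho$, which gives
\[
H_{1+\rho}(\bphi_\Delta \| \bq) = -\frac{1}{\rho} \log \sum_{\by \in \bm{\mathcal{Y}}} \bphi_\Delta(\by|\bx)\, \bq(\by|\bx)^{\rho}.
\]
Comparing these two displays directly yields $E_{\Delta,\rho}(\bp,\bq) = -H_{1+\rho}(\bphi_\Delta \| \bq)$.

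For completeness, I would also record that the second equality inside the definition of $E_{\Delta,\rho}$, namely $\mathbb{E}_{\bphi_\Delta} \bq(\by|\bx)^\rho = \mathbb{E}_{\bphi_\Delta}\exp(\rho\,\br(\bx,\by))$, is not part of the identity to prove; it is an immediate consequence of the normalization assumption $R_{\bx}=1$ in Definition~\ref{definition:alignment_distribution}, which makes $\bq(\by|\bx)=\exp(\br(\bx,\by))$.

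The only minor bookkeeping concerns the boundary case $\rho=0$, where the prefactor $1/\rho$ is formally singular and both sides must be interpreted by continuous extension. A brief L'H\^opital (or Taylor-in-$\rho$) calculation shows that each side tends to $-H(\bphi_\Delta\|\bq)$ as $\rho\to 0$, consistent with the already-noted identity $H_1=H$. There is no real obstacle in this proof; the lemma is essentially an observation that the $\rho$-th cumulant of $\log \bq$ under $\bphi_\Delta$ and the R\'enyi cross entropy of order $1+\rho$ of $\bphi_\Delta$ to $\bq$ are, after a sign and a relabeling of the order parameter, the same functional.
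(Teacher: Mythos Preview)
Your proposal is correct and matches the paper's own proof essentially line for line: the paper simply instantiates the definition of $H_{1+\rho}(\bphi_\Delta\|\bq)$, observes it equals $-\frac{1}{\rho}\log \mathbb{E}_{\bphi_\Delta}\bq(\by|\bx)^\rho$, and concludes. Your additional remarks on the $R_{\bx}=1$ normalization and the $\rho=0$ continuous extension are fine elaborations but not required by the paper.
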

\begin{proof}
    By definition,
\begin{equation}
    H_{1+\rho}(\bphi_\Delta \| \bq) = - \frac{1}{\rho} \log \mathbb{E}_{\by \in \bphi_{\Delta} (\cdot|\bx)} \bq_{\by|\bx} (\by |\bx)^\rho,
\end{equation}
which completes the proof.
\end{proof}

In particular, 
\begin{equation}
    E_{\Delta, 0}(\bp, \bq) = \mathbb{E}_{\by \in \bphi_{\Delta} (\cdot|\bx)} \log  \bq_{\by |\bx} (\by | \bx) = \mathbb{E}_{\by \in \bphi_{\Delta} (\cdot|\bx)} \br (\bx, \by) =  - H(\bphi_\Delta \| \bq).
\end{equation}

\section{Main  Results}
Equipped with the preliminaries, in this section we derive our main results. To this end, we make two simplifying assumptions.

\begin{assumption}[memoryless reference model]
    We assume that $\bp_{\by|\bx}$ is a memoryless source such that the outcome is a sequence of length $m$ from the $m$-product of the categorical distribution defined by stochastic vector $p \in \S$, where      $\S$ denotes the interior of the simplex over alphabet size $K,$ such that \begin{equation}
       \S := \left\{p : \sum_{k \in [K]} p_k = 1, \quad p_k > \zeta, \quad \forall k \neq k'~~ p_k \neq p_{k'}\right\}. 
    \end{equation}
\label{assump:memoryless}
\end{assumption}

\begin{assumption}[linear reward]
 We assume that the reward is the negative loglikelihood of an alignment distribution $\bq_{\by|\bx}$ that is memoryless and a product of categorical distribution $q \in \S$, i.e., $\bq_{\by|\bx}$ satisfies Assumption~\ref{assump:memoryless}. This immediately implies that $\br$ is bounded from above and is linear in the outcome.
 \label{assump:linear-reward}
\end{assumption}

 We adopt Assumptions~\ref{assump:memoryless}-\ref{assump:linear-reward} for brevity in this paper, and comment on extension to more general classes of language models and reward functions. Also note that the regularity condition on $\bq_{\by|\bx}$ is equivalently a regularity condition on the reward function (i.e. that it is bounded, additive, and assigns distinct reward to all of possible types).
\begin{definition}[type]
Let $y^m$ be a sequence of length $m$ supported on alphabet of size $K.$ Then the type of sequence $y^m$ is denoted by $\Type(y^m)$ defined as
\begin{equation}
    \Type(y^m) = \left(\frac{1}{m}\sum_{i \in [m]} \mathbf{1}\{y_i = 1\}, \ldots, \frac{1}{m}\sum_{i \in [m]} \mathbf{1}\{y_i = K\}\right),
\end{equation}
where $\mathbf{1}\{\cdot\}$ is the indicator function.
\label{def:type}
\end{definition}
\ssedit{The type of a sequence denotes its empirical distribution and is a sufficient statistic of the sequence under the memoryless assumption for the reference model.}

In this setup, our goal is to show that both best-of-$N$ and optimal KL-constrained RL method return sequences whose type (empirical distribution) is close to the intersection of the reward contour and the KL divergence contour. To this end, we first start with the KL-constrained RL solution.

\subsection{KL-Constrained RL Problem}
Under the memoryless setup of Assumption~\ref{assump:memoryless}-\ref{assump:linear-reward}, the aligned family and optimal KL-constrained RL solution can be further simplified as follows.
\begin{lemma}[optimal aligned model is in the same family as the reference model and reward model]
    \label{lemma:solution_to_KL_RL_iid}
    If Assumptions~\ref{assump:memoryless}-\ref{assump:linear-reward} hold, then for each $\bx$, $\bphi_\Delta( y^m | \bx)$ is also a memoryless distribution that satisfies Assumption~\ref{assump:memoryless}, i.e., 
 there exists a distribution $\phi_\delta$ such that $\Delta = m\delta$ and
    \[
\bphi_\Delta( y^m | \bx) = \prod^m_{i=1} \phi_\delta(y_i).
    \]
Furthemore, $ \phi_\delta$ is given by
    \begin{equation}
        \phi_\delta = T(q, p, \alpha(\delta)),
    \end{equation}
    where $T(q, p, \alpha)$ is the mismatched tilt~\citep[Definition 1]{salamatian2019mismatched}:
 \begin{equation}
     T(q, p, \alpha)(y) = \frac{p_i q_i^{\alpha}}{\sum_{j \in [K]} p_j q_j^{\alpha}}
 \end{equation}   
    and where
    \begin{equation}
        \alpha(\delta) := \arg_{\alpha \in \mathbb{R}+} \left\{\KL(T(q, p, \alpha) \| p) = \delta \right\}.
    \end{equation}
\end{lemma}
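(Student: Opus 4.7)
The plan is to reduce the general statement in \Cref{lemma:solution_to_KL_RL} to the per-token form by exploiting the product structure supplied by \Cref{assump:memoryless,assump:linear-reward}. First I would invoke \Cref{lemma:solution_to_KL_RL} to write
\[
\bphi_\Delta(y^m \mid \bx) = \frac{\bp(y^m \mid \bx)\,\bq(y^m \mid \bx)^{\alpha(\Delta)}}{Z_{\bx,\alpha(\Delta)}}.
\]
Under the two assumptions, both $\bp(y^m \mid \bx) = \prod_{i=1}^m p(y_i)$ and $\bq(y^m \mid \bx) = \prod_{i=1}^m q(y_i)$ factor across tokens, so the numerator factors as $\prod_{i=1}^m p(y_i)\, q(y_i)^{\alpha(\Delta)}$.

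Next I would show that the partition function factors as well. Writing
\[
Z_{\bx,\alpha} = \sum_{z^m \in [K]^m} \prod_{i=1}^m p(z_i)\, q(z_i)^\alpha = \left(\sum_{z \in [K]} p(z)\, q(z)^\alpha\right)^{\!m},
\]
and dividing, the sequence-level mismatched tilt factors into $m$ copies of the per-token mismatched tilt:
\[
\bphi_\Delta(y^m \mid \bx) = \prod_{i=1}^m \frac{p(y_i)\, q(y_i)^{\alpha(\Delta)}}{\sum_{z \in [K]} p(z)\, q(z)^{\alpha(\Delta)}} = \prod_{i=1}^m T(q, p, \alpha(\Delta))(y_i).
\]
This already exhibits $\bphi_\Delta$ as a product of a single categorical distribution $\phi := T(q, p, \alpha(\Delta))$ on $[K]$.

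It then remains to identify the scalar $\alpha(\Delta)$ with the per-token $\alpha(\delta)$ defined in the statement. Because KL divergence is additive across product distributions,
\[
\KL\bigl(\bphi_\Delta(\cdot \mid \bx) \,\|\, \bp(\cdot \mid \bx)\bigr) = m\,\KL\bigl(T(q,p,\alpha(\Delta)) \,\|\, p\bigr),
\]
so the sequence-level constraint $\KL(\bphi_\Delta(\cdot\mid \bx) \| \bp(\cdot\mid \bx)) = \Delta$ is equivalent to the per-token constraint $\KL(T(q,p,\alpha(\Delta)) \| p) = \Delta/m$. Setting $\delta := \Delta/m$, the defining equation for $\alpha(\Delta)$ collapses to the defining equation for $\alpha(\delta)$ in the lemma statement, and uniqueness from \Cref{lemma:solution_to_KL_RL} gives $\alpha(\Delta) = \alpha(\delta)$ and $\phi = \phi_\delta$.

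There is no real obstacle; the only point worth being careful about is the bijection between the sequence-level and per-token tilt parameters, which relies on the strict monotonicity of $\alpha \mapsto \KL(T(q,p,\alpha)\|p)$ on $\mathbb{R}^{\geq 0}$ under the regularity condition $q \in \S$ (distinct coordinates, bounded away from zero) built into \Cref{assump:memoryless,assump:linear-reward}. This monotonicity is standard for exponential tilts and was already implicitly used in \Cref{lemma:solution_to_KL_RL} to make the defining equation~\eqref{eq:alpha-Delta} well-posed; the per-token version is just its restriction to a single-letter alphabet.
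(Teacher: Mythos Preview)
Your proof is correct and follows essentially the same approach as the paper: both invoke \Cref{lemma:solution_to_KL_RL} and then exploit the product structure of $\bp$ and $\bq$ together with the additivity of cross-entropy and KL divergence over product distributions. Your version is simply more explicit---you factor the closed-form mismatched tilt directly, whereas the paper's proof just records the tensorization identities $H(\phi_\delta^m\|q^m)=mH(\phi_\delta\|q)$ and $\KL(\phi_\delta^m\|p^m)=m\KL(\phi_\delta\|p)$ and leaves the rest implicit.
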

\begin{proof}
The proof follows from Lemma~\ref{lemma:solution_to_KL_RL}
and by identifying that:
\begin{align*}
H(\bphi_\Delta \| \bq) & = H(\phi^m_\delta \| q^m) = m H (\phi_\delta \| q), \\
\KL(\bphi_\Delta \| \bq) & = \KL(\phi^m_\Delta \| q^m) = m \KL (\phi_\delta \| q) .
\end{align*}
\end{proof}

\begin{figure}[t]
    \centering
    \includegraphics[width=0.8\linewidth]{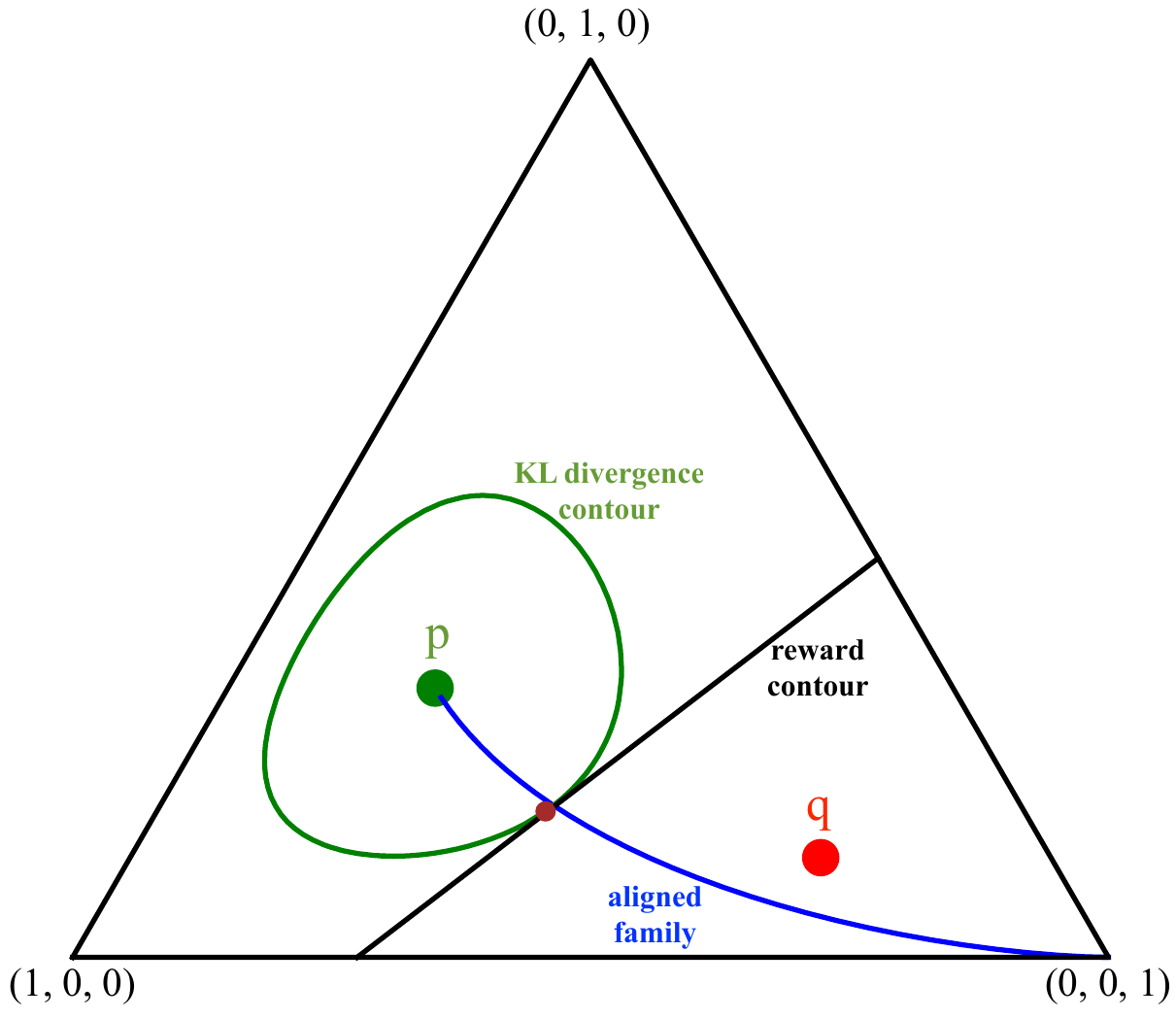}
    \caption{We consider a ternary memoryless source, where the green point is $p = (\frac{1}{5}, \frac{3}{10}, \frac{1}{2})$, the red point is $q = (\frac{2}{3}, \frac{1}{9}, \frac{2}{9})$. The green curve characterizes all $v$ s.t. $\KL(v \| p) = \Delta = 0.11$, which is the KL contour. The black line characterizes all $w$ s.t. $H(w\|q) = H(T(q, p, \alpha(0.11) \| q),$ which is a linear family. The blue curve is $\mathcal{T}_{p,q},$ i.e., the aligned family. The unique intersection of the green KL ball and the black constant reward front is a point on the blue curve which is $\phi_\Delta.$
    The brown point is the expected value of the type of the best-of-$N$ for $N = \exp(m \delta) \approx 3,$ where $m = 10,$ which is remarkably close to $\phi_\Delta.$ }
    \label{fig:simplex}
\vspace{-.1in}
\end{figure}

\begin{example}[ternary memoryless source]
Let us introduce a recurring example of such source on a ternary alphabet $\mc{Y} = \{1, 2, 3\},$ with $K =3.$ Let $p = (\frac{1}{5}, \frac{3}{10}, \frac{1}{2})$ and $q = (\frac{2}{3}, \frac{1}{9}, \frac{2}{9})$. In Figure~\ref{fig:simplex}, the two distributions are depicted on the ternary probability simplex. A KL divergence contour $\{\phi: \KL(\phi \| p) =  \delta\}$ is depicted via the green curve. A reward contour $\{\phi: H(\phi \|p) = c\}$ is depicted via the black line. The aligned family $\{\phi_\delta\}_{\delta \geq 0}$ is depicted via the blue line. %
\end{example}

Equipped with this result, we now show that the sequence of optimal aligned models as a function the sequence length, $m,$ satisfies a large deviation principle (LDP).
\begin{definition}[large deviation principle (LDP)]
A sequence of random variables $\{Y_n:n\in\mathbb{N}\}$ taking values in $\mathbb{R}$ satisfies the Large Deviation Principle with rate function $J:\mathbb{R}\to[0,\infty]$ if $J$ is lower semi-continuous and has compact level-sets, and for all Borel sets $B$
\begin{align*}
-\inf_{t\in B^\circ} J(t)
        & \leq \liminf_{n\to\infty} \frac 1n \log \mathbb{P}\{Y_n\in B\} \leq \limsup_{n\to\infty} \frac 1n \log \mathbb{P}\{Y_n\in B\}
        \leq -\inf_{t\in \bar{B}} J(t),
\end{align*}
where $B^\circ$ is the interior of $B$ and $\bar{B}$ is its closure.
\end{definition}
\ssedit{The rate function of the Large Deviation Principle (LDP) thoroughly characterizes the probability of rare events in which $Y_n$ significantly deviates from its mean. Specifically, an LDP offers a complete description of the cumulant generating function of $Y_n$ by employing Varadhan's Lemma~\citep[Theorem 4.3.1]{Dembo}.}
We are now ready to state our next result which is an LDP for the reward function for sequences generated from $\bphi_\Delta$, and under  Assumptions~\ref{assump:memoryless}-\ref{assump:linear-reward}.

\begin{theorem}[LDP for aligned reward]
Let $Y^m \sim \phi_\delta^m.$
    The sequence $\{ -\frac{1}{m} \log q^m(Y^m)\}_{m \in \mathbb{N}}$ satisfies an LDP with rate function
    \begin{equation}
   J_\Delta(t) =  \KL(T(q, p, \beta(t)) \| \phi_\delta ), 
   \label{eq:J-delta}
\end{equation}    
where 
$
    \beta(t) = \arg_{\beta \in \mathbb{R}} \{ H(T(q, p, \beta) \| q) = t \}.
$
\label{thm:LDP-reward}
\end{theorem}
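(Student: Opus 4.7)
The plan is to reduce the LDP for the scalar $-\frac{1}{m}\log q^m(Y^m)$ to Sanov's theorem together with the contraction principle, and then identify the resulting variational problem with the mismatched-tilt family appearing in Lemma \ref{lemma:solution_to_KL_RL_iid}.

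First, since $q^m$ is a product distribution and the logarithm decomposes additively, the quantity of interest is a linear functional of the empirical type: if $\hat{P}_m$ denotes the type of $Y^m$ (Definition \ref{def:type}), then
\begin{equation*}
-\frac{1}{m}\log q^m(Y^m) = -\sum_{k \in [K]} \hat{P}_m(k) \log q(k) = H(\hat{P}_m \| q).
\end{equation*}
Because $Y^m \sim \phi_\delta^m$ is i.i.d. on the finite alphabet $[K]$, Sanov's theorem yields an LDP for $\{\hat{P}_m\}_{m \in \mathbb{N}}$ on the simplex $\S$ with good rate function $\nu \mapsto \KL(\nu \| \phi_\delta)$. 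Under Assumption \ref{assump:linear-reward}, the coordinates of $q \in \S$ are bounded below by $\zeta > 0$, so $\nu \mapsto H(\nu \| q)$ is continuous (in fact linear) on $\S$. The contraction principle therefore transfers the LDP to $H(\hat{P}_m \| q)$, with rate function
\begin{equation*}
J_\Delta(t) = \inf_{\nu \in \S\,:\, H(\nu \| q) = t} \KL(\nu \| \phi_\delta).
\end{equation*}

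Solving this constrained optimization recovers the claimed expression. The objective is strictly convex in $\nu$ and the constraints $H(\nu \| q) = t$ and $\sum_k \nu_k = 1$ are affine, so a KKT point is the unique minimizer. Introducing multipliers $\lambda$ (for the reward constraint) and $\mu$ (for normalization), stationarity in $\nu_k$ yields $\nu_k \propto \phi_\delta(k)\, q(k)^{\lambda}$ for some scalar $\lambda$. Plugging in $\phi_\delta(k) \propto p(k)\, q(k)^{\alpha(\delta)}$ from Lemma \ref{lemma:solution_to_KL_RL_iid} collapses the two exponents of $q(k)$ into a single one, so $\nu_k \propto p(k)\, q(k)^{\beta}$ for some $\beta = \beta(t) \in \mathbb{R}$; this is exactly the mismatched tilt $T(q, p, \beta(t))$, with $\beta(t)$ pinned down by the requirement $H(T(q, p, \beta) \| q) = t$. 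Substituting back into the objective then gives $J_\Delta(t) = \KL(T(q, p, \beta(t)) \| \phi_\delta)$, matching \eqref{eq:J-delta}.

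The main subtlety will be verifying that $\beta(t)$ is well-defined as a function, which amounts to showing that $\beta \mapsto H(T(q, p, \beta) \| q)$ is a strictly monotone bijection onto the effective domain of $J_\Delta$; this follows from the strict convexity of the log-partition $\beta \mapsto \log \sum_k p(k)\,q(k)^\beta$ guaranteed by the distinct-coordinates regularity of Assumption \ref{assump:memoryless}. One also needs to confirm that $J_\Delta(t) = +\infty$ outside the range of $\beta(\cdot)$, i.e., outside the interval traced by $H(\nu \| q)$ as $\nu$ ranges over $\S$, which is immediate since the feasible set of the infimum is empty there. The remaining lower-semicontinuity and compact-level-set requirements of the definition of an LDP follow from standard properties of $\KL(\cdot\|\phi_\delta)$ on the simplex, so everything else reduces to bookkeeping within the mismatched-tilt framework of \citet{salamatian2019mismatched}.
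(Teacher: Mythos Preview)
Your proof is correct and takes a somewhat different, more streamlined route than the paper's. The paper first establishes an auxiliary LDP for the ``mismatched information'' $-\tfrac{1}{m}\log q^m(Y^m)$ under the \emph{base} source $Y^m \sim p^m$ (stated separately in the appendix), using hand-built tilted typical sets $\mathcal{D},\mathcal{E},\mathcal{B}$ and applying Sanov's theorem directly to those sets; it then transfers that LDP to the aligned source $\phi_\delta^m$ via the tilt-of-tilt identity $T(q,\phi_\delta,\beta') = T(q,p,\alpha(\delta)+\beta')$. You instead bypass the auxiliary theorem entirely: Sanov $\Rightarrow$ contraction principle along the linear map $\nu \mapsto H(\nu\|q)$, then solve the resulting one-line variational problem by KKT, using $\phi_\delta \propto p\,q^{\alpha(\delta)}$ to absorb the Lagrange multiplier into a single tilt parameter $\beta$. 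Your route is cleaner and leans only on off-the-shelf tools; the paper's route is more constructive and makes the link to the mismatched-guesswork framework of \citet{salamatian2019mismatched} explicit, which it reuses elsewhere. Both arguments treat the boundary values of $t$ (where $\beta(t)\to\pm\infty$) at the same level of informality.
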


\begin{proof}[Proof sketch of Theorem~\ref{thm:LDP-reward}] %
The proof relies on a connection with the LDP of mismatched information along with a change of measure along the aligned family to $\phi_\delta^m$.
The LDP for the mismatched information is itself proven using a proof technique developed in ~\citep[Theorem 2]{salamatian2019mismatched}. More precisely, one first constructs sets in the space of distribution and argue that the empirical type of a sequence falling within these sets constrains the reward. Finally, a careful application of Sanov's Theorem~\citep[Theorem~6.2.10]{Dembo} concludes the argument.
\end{proof}

As we mentioned above, one practical usecase for the LDP of the reward, is the derivation of the scaled cumulant of function via Varadhan's lemma. %

\begin{corollary}[cumulants of the reward under aligned model]
\label{cor:varadhan_ldp}
    For all $\rho \geq 0,$\footnote{The definition is continuously extended at $\rho = 0$.} the scaled cumulants of the reward are given by
    \begin{equation}
      \lim_{m \to \infty} \frac{1}{m} \frac{1}{\rho} \log E_{Y^m \sim \phi^m_\delta} e^{\rho \, r(Y^m)} = - H_{1+\rho} (\phi_\delta \| q).
    \end{equation}
\end{corollary}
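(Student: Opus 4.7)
The plan is to read off the corollary from Theorem~\ref{thm:LDP-reward} via Varadhan's lemma. Setting $Z_m := -\frac{1}{m}\log q^m(Y^m)$, the convention $R_{\bx}=1$ gives $r(Y^m) = \log q^m(Y^m) = -mZ_m$, so that
\[
\frac{1}{m\rho}\log \E_{Y^m\sim\phi_\delta^m} e^{\rho r(Y^m)} = \frac{1}{\rho}\cdot\frac{1}{m}\log \E_{Y^m\sim\phi_\delta^m} e^{-\rho m Z_m}.
\]
Theorem~\ref{thm:LDP-reward} provides an LDP for $\{Z_m\}$ with rate function $J_\Delta$. Applying Varadhan's lemma to the continuous function $F(t)=-\rho t$ reduces the right-hand side, in the limit $m\to\infty$, to $\frac{1}{\rho}\sup_{t}\{-\rho t - J_\Delta(t)\}$. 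The required moment/tail condition is automatic: Assumption~\ref{assump:linear-reward} together with $q\in\S$ (entries bounded away from $0$) implies that $r$ is bounded, so $Z_m$ is uniformly bounded and all exponential moments are trivially finite.

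The remaining step is to identify $\sup_t\{-\rho t - J_\Delta(t)\}$ with the log-moment generating function of $-\log q(Y)$ under $Y\sim\phi_\delta$. Since $Z_m$ is an empirical mean of i.i.d.\ copies of $-\log q(Y)$, Cram\'er's theorem identifies $J_\Delta$ as the convex conjugate of $\lambda\mapsto \log\sum_y \phi_\delta(y)\, q(y)^{-\lambda}$; Legendre duality then yields
\[
\sup_t \{-\rho t - J_\Delta(t)\} = \log\sum_{y \in \mc{Y}} \phi_\delta(y)\, q(y)^{\rho}.
\]
Dividing by $\rho$ and recognizing the right-hand side as $-H_{1+\rho}(\phi_\delta\|q)$ gives the claim.

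The main obstacle is the Legendre-duality step, where one must verify that the rate function $J_\Delta(t)=\KL(T(q,p,\beta(t))\|\phi_\delta)$ from \eqref{eq:J-delta} is indeed the Cram\'er conjugate of $\lambda\mapsto \log\sum_y\phi_\delta(y)q(y)^{-\lambda}$. A direct parametric calculation, matching the stationarity condition in $\beta$ to the tilt parameter, identifies the optimizer as $\beta(t^*)=\alpha(\delta)+\rho$, and substitution recovers the log-MGF. As a sanity check one can bypass the LDP entirely: under Assumptions~\ref{assump:memoryless}-\ref{assump:linear-reward}, the exact factorization $\E_{Y^m\sim\phi_\delta^m} e^{\rho r(Y^m)} = \bigl(\sum_y \phi_\delta(y) q(y)^\rho\bigr)^m$ holds, which is Lemma~\ref{lem:cumulant-general} applied with the memoryless product structure and makes the stated limit hold as equality for every finite $m$.
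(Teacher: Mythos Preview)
Your proof is correct and follows the same Varadhan-based strategy as the paper. The one genuine difference is in how you evaluate $\sup_t\{-\rho t - J_\Delta(t)\}$: the paper expands $J_\Delta(t)=\KL(T(q,p,\beta)\|\phi_\delta)$ directly, collects terms, and rewrites the objective as $-\KL(T(q,p,\beta)\|T(q,\phi_\delta,\rho)) - \log\sum_y \phi_\delta(y)q(y)^{\rho}$, so the optimizer is read off as $T(q,p,\beta)=T(q,\phi_\delta,\rho)$ (equivalently $\beta=\alpha(\delta)+\rho$, exactly the parametric calculation you mention). Your primary route instead invokes Cram\'er's theorem to identify $J_\Delta$ with the Legendre transform of $\Lambda(\lambda)=\log\sum_y\phi_\delta(y)q(y)^{-\lambda}$ and then recovers $\Lambda(-\rho)$ by Fenchel biconjugation; this is slicker and avoids any algebra, at the cost of appealing to uniqueness of the LDP rate function. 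Both arrive at the same place, and your closing remark that the i.i.d.\ factorization makes the identity exact for every finite $m$ (which the paper also notes via Lemma~\ref{lem:cumulant-general}) is a useful sanity check.
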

Note that the same result could have been directly derived from the characterization of the cumulants in Lemma~\ref{lem:cumulant-general} as well.
This immediately implies that the average reward of a draw from $\phi_\delta$ concentrates on $ H (\phi_\delta \| q):$
 \begin{equation}
      \lim_{m \to \infty} \frac{1}{m}   E_{Y^m \sim \phi^m_\Delta} r(Y^m) = - H (\phi_\delta \| q).
    \end{equation}
This characterizes the expected reward that is achieved by the solution to the KL-constrained RL problem.

\subsection{Best-of-$N$ Alignment}

Next, we provide theoretical guarantees on the best-of-$N$ alignment method and relate it to the optimal KL-constrained alignment problem. We first start with proving some properties of the best-of-$N$ policy. 

Under Assumptions \ref{assump:memoryless}-\ref{assump:linear-reward}, let $\pi^m_N$ denote best-of-$N$ distribution on sequences of length $m.$
Intuitively, one would think that $\pi^m_N$ still retains the nice memoryless property of $p^m$; we show below that, sadly this property is lost.

\begin{example}[The set of memoryless sources is not closed under best-of-$N$]
  \label{example:best-of-N-not-product}
  Let $m=2$ and $N=2$. Let $p$ be a uniform distribution over
  $\mathcal{Y}= \{ 0, 1, 2 \}$: $p (y_1 = 0) = 0.2$, $p (y_1 = 1) = 0.3$, $p(y_2 = 2) = 0.5$. Let the reward function be $r (y = 0) =
  \log_e 6$, $r (y = 1) = 0$ and $r (y = 2) = \log_e 2$. Additionally,
  suppose best-of-$N$ will pick uniformly for tie-breaking. We can compute
  directly $\pi^m_N$ and show that it is no longer i.i.d.:
  $\pi^m_N (y_1 = 0) = \pi^m_N(y_2 = 0) = 209/625$, while, 
  \[
  \pi^m_N (y_1 = 0, y_2 = 0) = 49/625 \neq \pi^m_N (y_1 = 0) \pi^m_N(y_2 = 0).
  \]
  For completeness, we list $\pi^m_N$ for all outcomes in Table~\ref{tab:product}.
  \begin{table}[h]
  \centering
  \caption{Full description of $\pi_N^m$ for Example~\ref{example:best-of-N-not-product}}
  \begin{tabular}{cccc}
    $\pi_N^m (y_1, y_2)$ & $y_1 = 0$ & $y_1 = 1$ & $y_1 = 2$\vspace{0.05in}\\
    \hline
    $y_2 = 0$ & 49/625 & 21/250 & 43/250\\
    $y_2 = 1$ & 21/250 & 81/10000 & 9/125\\
    $y_2 = 2$ & 43/250 & 9/125 & 103/400\\
  \end{tabular}
  \label{tab:product}
\end{table}
\end{example}

Even though $\pi^m_N$ is not a memoryless distribution, we show that it still is symmetric and is an exchangeable distribution.
\begin{lemma}[best-of-$N$ alignment model is exchangeable]
  If Assumptions~\ref{assump:memoryless}-\ref{assump:linear-reward} hold and the ties in best-of-$N$ algorithm are broken uniformly at random, then   $\pi^m_N$ is an exchangeable distribution, i.e., \[\pi^m_N(X_i=x_i, \ldots, X_j=x_j) = \pi^m_N(X_i=x_j, \ldots, X_j=x_i),\] for any $i \neq j$.
\label{lem:exchangeable}
\end{lemma}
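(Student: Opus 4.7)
The plan is to exploit two symmetries inherited from Assumptions~\ref{assump:memoryless}-\ref{assump:linear-reward}: (i) the reference $p^m$ is i.i.d.\ and hence symmetric under coordinate permutations, and (ii) the reward, being additive in the symbols, assigns the same value to any two sequences that are permutations of each other. Together with uniform tie-breaking, these symmetries should commute with the best-of-$N$ selection rule.

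Concretely, I would fix an arbitrary permutation $\sigma \in S_m$ and let $T_\sigma : \mathcal{Y}^m \to \mathcal{Y}^m$ denote the coordinate relabeling $T_\sigma(y_1,\dots,y_m) = (y_{\sigma(1)},\dots,y_{\sigma(m)})$. I would then construct an explicit coupling: draw $N$ i.i.d.\ candidates $Y^m_{1},\dots,Y^m_{N} \sim p^m$, run best-of-$N$ on them (with a random tie-breaker $U$ independent of everything) to get an output $Z^m$. In parallel, consider the candidates $T_\sigma(Y^m_{1}),\dots,T_\sigma(Y^m_{N})$, run best-of-$N$ with the \emph{same} tie-breaker $U$, and call the output $Z'^m$. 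I would then verify three facts in sequence: (a) since $p^m$ is i.i.d., the joint law of $(T_\sigma(Y^m_{1}),\dots,T_\sigma(Y^m_{N}))$ equals that of $(Y^m_{1},\dots,Y^m_{N})$; (b) since $r$ is additive under Assumption~\ref{assump:linear-reward}, $r(T_\sigma(y^m)) = r(y^m)$ for every $y^m$, so the multiset of rewards and, more importantly, the \emph{index} chosen by best-of-$N$ (together with the tie-breaking outcome) is the same in both pipelines; (c) consequently $Z'^m = T_\sigma(Z^m)$ almost surely. Combining (a) and (c) yields that $T_\sigma(Z^m)$ has the same law as $Z^m$, which is exactly the claimed exchangeability.

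One tidy way to write (b) and (c) is via the explicit PMF of $\pi^m_N$ from~\citet[Lemma 1]{beirami2024theoretical}: it is a functional of the CDF of the reward under $p^m$ and of $p^m(y^m)$ itself, both of which are invariant under $T_\sigma$ because $p^m$ is i.i.d.\ and $r$ is symmetric. So an alternative route is to just plug $T_\sigma(y^m)$ into that closed-form PMF and check invariance term by term; this avoids any coupling language.

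The only subtlety is handling ties carefully, since the selection is not a deterministic function of the candidates. The argument goes through precisely because the tie-breaking distribution is uniform on the argmax set, and permuting the coordinates of each candidate does not change which candidates attain the maximum reward. I expect this to be the only step worth spelling out explicitly; everything else is bookkeeping built on top of the symmetry of $p^m$ and the symmetry of $r$.
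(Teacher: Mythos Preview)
Your proposal is correct and takes essentially the same approach as the paper: a coupling argument that permutes the coordinates of the i.i.d.\ reference samples, invokes the permutation-invariance of both $p^m$ and the additive reward, and concludes that the best-of-$N$ output law is invariant under the same permutation. The paper's proof is terser---it argues only for the transposition of coordinates $1$ and $2$ (claiming WLOG) and does not explicitly separate the tie-breaking step---whereas you treat a general $\sigma\in S_m$, make the role of the shared tie-breaker $U$ explicit, and also sketch an alternative verification via the closed-form PMF of~\citet{beirami2024theoretical}; but the core idea is the same.
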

\begin{proof}[Proof of Lemma~\ref{lem:exchangeable}]
    Without loss of generality, we prove for $i = 1$ and $j = 2$. We need to show
    \[ P [\pi_N^m = (x_1, x_2, \ldots, x_m)] = P [\pi_N^m = (x_2, x_1, \ldots,
       x_m)] . \]
    Let $\bp(\cdot | \bx) = p^m$. Let $p_c^m$ be a coupling of $p^m$ such that it will swap first item and
    second item of $p^m$. Since $p^m$ is memoryless and product of identical distributions, the probability mass function of $p_c^m$ and $p^m$ are
    the same. By running best-of-$N$ algorithm over $p_c^m$ and $p^m$, we can see that their
    first and second item are swapped while having the same probability mass function for their     corresponding best-of-$N$.
\end{proof}
\ssedit{In view of the previous example, it may seem hopeless to relate best-of-$N$ aligned model and the optimal aligned model, given that the latter remains a memoryless source whereas best-of-$N$ does not even satisfy that criterion.}
However, we show that for certain values of $m$ and $N$, both the optimal KL-constrained RL alignment and best-of-$N$ alignment would roughly be doing the same thing, which we establish using Lemma~\ref{lem:almost-optimal-aligned}.

To this end, we first state a KL divergence  upper bound for the best-of-$N$ strategy.
\begin{lemma}[KL divergence upper bound for best-of-$N$] For any $N,$ $\bx,$ $m,$ and $p,$
    \begin{equation}
           \KL(\pi^m_N(\cdot \| \bx) \| p^m(\cdot \| \bx))
   \leq \log N.
    \end{equation}
\end{lemma}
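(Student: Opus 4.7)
The plan is to establish the pointwise bound $\pi_N^m(y \mid \bx) \leq N \cdot p^m(y \mid \bx)$ for every $y$, and then insert it directly into the definition of the KL divergence.

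First I would verify the pointwise bound by a one-line union bound on the event that best-of-$N$ returns a given sequence $y$. Writing $y_1, \ldots, y_N \stackrel{\text{i.i.d.}}{\sim} p^m(\cdot \mid \bx)$ and letting $A_k = \{y_k = y, \ k^* = k\}$ (where $k^*$ is the index selected by the best-of-$N$ rule, with ties broken according to any fixed randomization), we have $\{\text{best-of-}N = y\} = \bigcup_{k=1}^N A_k$, so
\begin{equation*}
    \pi_N^m(y \mid \bx) \;=\; \Pr\!\Bigl[\bigcup_{k=1}^N A_k\Bigr] \;\leq\; \sum_{k=1}^N \Pr[A_k] \;\leq\; \sum_{k=1}^N \Pr[y_k = y] \;=\; N \cdot p^m(y \mid \bx).
\end{equation*}
(Alternatively, one can read the same inequality directly from the explicit PMF of best-of-$N$ given in the cited lemma of Beirami et al.)

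Second, I would substitute this bound into the KL divergence:
\begin{equation*}
    \KL\bigl(\pi_N^m(\cdot \mid \bx) \,\bigm\|\, p^m(\cdot \mid \bx)\bigr) \;=\; \sum_{y} \pi_N^m(y \mid \bx) \log \frac{\pi_N^m(y \mid \bx)}{p^m(y \mid \bx)} \;\leq\; \sum_{y} \pi_N^m(y \mid \bx) \log N \;=\; \log N,
\end{equation*}
using that $\pi_N^m$ is a probability distribution.

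There is essentially no main obstacle here: the only subtlety is handling tie-breaking correctly so that the union-bound step goes through. Since the best-of-$N$ procedure deterministically picks exactly one index $k^*$ per realization of $(y_1,\ldots,y_N)$ (even if this selection is itself randomized), the events $A_k$ are disjoint across $k$ for any fixed realization, so $\Pr[\bigcup_k A_k] = \sum_k \Pr[A_k]$ actually holds with equality, and the inequality $\Pr[A_k] \leq \Pr[y_k = y]$ is immediate. The rest is a one-line substitution.
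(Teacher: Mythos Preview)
Your proof is correct. The paper itself does not supply an argument for this lemma; it simply invokes Theorem~1 of \citet{beirami2024theoretical} and declares the bound a direct corollary. Your proposal is therefore strictly more self-contained than the paper's treatment: the pointwise inequality $\pi_N^m(y\mid\bx)\le N\,p^m(y\mid\bx)$ via the disjoint-events decomposition $\{y_k=y,\,k^*=k\}$, followed by substitution into the KL definition, is precisely the elementary argument one expects to underlie the cited result (and your parenthetical remark that the explicit PMF from the cited lemma also delivers this inequality is accurate). There is nothing to add or fix.
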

\begin{proof}
    This is a direct corollary of \citep[Theorem 1]{beirami2024theoretical}.
\end{proof}

Given this result, we set $N = \exp(m \delta)$ such that the optimal KL-constrained aligned model and best-of-$N$ both satisfy the same upper bound on their KL divergence, i.e., $\Delta = m\delta$. To show that they are close to each other in KL divergence using Lemma~\ref{lem:almost-optimal-aligned}, we need to establish that they roughly achieve the same expected reward, which then implies the KL divergence between the two distributions $\pi^m_N$ and $\phi_\delta^m$ tends to zero. We show that not only the two methods have the same reward asymptotically, but their output types are also similar.
\begin{restatable}{lemma}{lemmaTypeConvergence}
    \label{lemma:type_convergence} Let Assumptions~\ref{assump:memoryless}-\ref{assump:linear-reward} hold.
    Let $Y^m \sim \pi_N^m$ be a sequence generated from the best-of-$N$ method.  Let $\delta \in [0, \log \frac{1}{\zeta}]$, and $N = \exp(m\delta)$. Denote by $t(Y^m)$ the type of $Y^m$ (Definition~\ref{def:type}). Then, for any $\epsilon > 0$, we have:
    \begin{equation}
        \lim_{m \to \infty}\mathbb{P}(|t(Y^m) - \phi_\delta| \leq \epsilon) \to 1.
    \end{equation}
\end{restatable}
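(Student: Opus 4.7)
The approach is to show that $t(Y^m)\to\phi_\delta$ in probability by combining Sanov's theorem with the variational characterization of $\phi_\delta$ from Lemma~\ref{lemma:solution_to_KL_RL_iid}: $\phi_\delta$ is the unique minimizer of the linear functional $H(\cdot\|q)$ over the strictly convex KL-ball $\{\tau:\KL(\tau\|p)\le\delta\}$. The best-of-$N$ rule picks the sample with smallest $H(t(Y_j)\|q)$, while Sanov constrains the accessible types of $N=e^{m\delta}$ i.i.d.\ samples to essentially $\{\tau:\KL(\tau\|p)\le\delta\}$. Thus best-of-$N$ asymptotically solves the same optimization that defines $\phi_\delta$. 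I would split the argument into a lower-bound step (``some sample is near-optimal'') and an upper-bound step (``no sample is far from $\phi_\delta$ yet competitive'').

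\textbf{Step 1: existence of a near-optimal sample.} The case $\delta=0$ is just the law of large numbers for $N=1$, so assume $\delta>0$. Since $\phi_\delta$ lies in the interior of $\S$ and $\KL(\phi_\delta\|p)=\delta>0$, one can find an open set $U\subset B(\phi_\delta,\epsilon/2)\cap\S$ (for instance, a small neighborhood of the aligned-family member $\phi_{\delta-\gamma}$, which sits close to $\phi_\delta$ by continuity of the mismatched tilt in $\alpha$) with $\sup_{\tau\in U}\KL(\tau\|p)\le \delta-\gamma/2$ for some $\gamma=\gamma(\epsilon)>0$. Sanov's lower bound then gives $\mathbb{P}(t(Y_1)\in U)\ge \exp(-m(\delta-\gamma/4))$ for $m$ large, so $N\cdot\mathbb{P}(t(Y_1)\in U)\ge e^{m\gamma/4}\to\infty$, and hence $\mathbb{P}(\exists j\le N:\,t(Y_j)\in U)\to 1$. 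By continuity of $H(\cdot\|q)$, any such sample has reward within any preselected $\eta_1/2$ of $-m H(\phi_\delta\|q)$.

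\textbf{Step 2: quantitative uniqueness and no distant winner.} I would next establish a quantitative uniqueness statement for $\phi_\delta$: for every $\epsilon>0$ there exist $\eta_1,\eta_2>0$ such that
\[
\tau\in\S,\ |\tau-\phi_\delta|\ge\epsilon,\ H(\tau\|q)\le H(\phi_\delta\|q)+\eta_1 \ \Longrightarrow\ \KL(\tau\|p)\ge \delta+\eta_2.
\]
This follows by a compactness--continuity argument on the compact slab $\{\tau\in\S:|\tau-\phi_\delta|\ge\epsilon\}$: uniqueness of $\phi_\delta$ gives a strictly positive infimum of $H(\cdot\|q)-H(\phi_\delta\|q)$ on the intersection of that slab with $\{\KL(\cdot\|p)\le\delta\}$, and continuity of $H(\cdot\|q)$ and $\KL(\cdot\|p)$ prevents this gap from collapsing when the KL budget is relaxed by a small $\eta_2$. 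Now suppose the best-of-$N$ winner $Y^*$ satisfies $|t(Y^*)-\phi_\delta|\ge\epsilon$. Since $Y^*$ has reward at least that of the good sample from Step~1, we get $H(t(Y^*)\|q)\le H(\phi_\delta\|q)+\eta_1$, and the implication above forces $\KL(t(Y^*)\|p)\ge\delta+\eta_2$. But Sanov's upper bound plus a union bound over the $N$ draws yields
\[
\mathbb{P}\bigl(\exists j:\ \KL(t(Y_j)\|p)\ge\delta+\eta_2\bigr)\le N(m+1)^{K-1}\exp(-m(\delta+\eta_2))=(m+1)^{K-1}e^{-m\eta_2}\to 0,
\]
so with probability $\to 1$ no such $Y^*$ exists. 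Combining Steps~1 and~2 gives $\mathbb{P}(|t(Y^m)-\phi_\delta|\le\epsilon)\to 1$.

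\textbf{Main obstacle.} The primary technical point is the quantitative uniqueness in Step~2 — upgrading the qualitative fact that $\phi_\delta$ is the unique minimizer of $H(\cdot\|q)$ on $\{\KL(\cdot\|p)\le\delta\}$ into a two-sided modulus relating $\epsilon$, $\eta_1$, and $\eta_2$, and ensuring the optimality gap survives a small slackening of the KL budget. A secondary nuisance is bookkeeping the polynomial-in-$m$ prefactors from the method of types so that they remain dominated by the exponential $N=e^{m\delta}$ in both the Sanov lower-bound and the union-bound directions.
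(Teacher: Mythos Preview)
Your argument is correct and proceeds along a genuinely different line from the paper's. The paper bounds $\mathbb{P}(t(Y^m)=\lambda)$ for each fixed type $\lambda$ by writing the best-of-$N$ probability as (up to ties) $N\cdot\mathbb{P}(t(X^m)=\lambda)\cdot\mathbb{P}(r(X^m)\le r(\lambda))^{N-1}$, bounding the last factor via Sanov, and then invoking the information-Pythagorean identity $D(\lambda\|p)=D(\lambda\|T(p,q,\alpha(\lambda)))+D(T(p,q,\alpha(\lambda))\|p)$ to split into the three cases $D(T(p,q,\alpha(\lambda))\|p)\gtrless\delta$. In contrast, you run a two-event sandwich: Sanov's lower bound guarantees a sample whose type sits inside the KL ball of radius just under $\delta$ and hence has near-optimal reward, while Sanov's upper bound plus a union bound rules out any sample with $\KL(\cdot\|p)\ge\delta+\eta_2$; the quantitative uniqueness of $\phi_\delta$ (strict convexity of the KL ball, linear objective, compactness) then forces the winner's type into $B(\phi_\delta,\epsilon)$. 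Your route avoids the Pythagorean identity and the three-way case split, and is arguably more robust --- it would carry over to any setting where a Sanov-type LDP holds for the empirical type and $\phi_\delta$ is the unique optimizer of the reward subject to the KL constraint. The paper's route, on the other hand, yields explicit type-by-type exponents $D(\lambda\|T(p,q,\alpha(\lambda)))+[D(T(p,q,\alpha(\lambda))\|p)-\delta]_+$, which is the natural starting point for the LDP for best-of-$N$ that the paper conjectures after Theorem~\ref{thm:kl_divergence}. One small presentational point: in Step~1 you should fix $\eta_1$ first (from Step~2, depending on $\epsilon$) and only then choose $\gamma$ and the radius of $U$ small enough that both $U\subset B(\phi_\delta,\epsilon/2)$ and $\sup_U|H(\cdot\|q)-H(\phi_\delta\|q)|\le\eta_1/2$ hold; as written the order of quantifiers is slightly ambiguous.
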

The above result implies the following result on the reward of the best-of-$N$ model.
\begin{lemma}[best-of-$N$ achieves the same reward as optimal aligned model]
\label{cor:same_reward}
Let Assumptions~\ref{assump:memoryless}-\ref{assump:linear-reward} hold. Let $\delta \in [0, \log \frac{1}{\zeta}]$  and $N = \exp(m \delta)$.  
Then 
\[
\lim_{m \rightarrow \infty} \frac{1}{m} H(\pi^m_N(\cdot | \bx) \| q^m(\cdot | \bx)) =\lim_{m \to \infty} \frac{1}{m} H(\phi^m_\delta (\cdot | \bx) \| q^m(\cdot | \bx))
 \]
\end{lemma}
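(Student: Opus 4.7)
The plan is to express both cross-entropies as expectations of a continuous bounded functional of the empirical type, and then apply Lemma~\ref{lemma:type_convergence} together with bounded convergence.

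First, I would note that because $q^m$ is a memoryless product distribution and the reward is linear in the type (Assumption~\ref{assump:linear-reward}), the normalized cross entropy admits a clean type-based expression: for any sequence $y^m$,
\begin{equation}
-\frac{1}{m}\log q^m(y^m) = -\frac{1}{m}\sum_{i \in [m]} \log q(y_i) = \sum_{k \in [K]} t_k(y^m) \log \frac{1}{q_k} = H(t(y^m) \| q),
\end{equation}
where $t(y^m)$ is the empirical type from Definition~\ref{def:type}. Taking expectation over $Y^m \sim \pi_N^m$, this gives
\begin{equation}
\frac{1}{m} H(\pi^m_N(\cdot|\bx) \| q^m(\cdot|\bx)) = \mathbb{E}_{Y^m \sim \pi_N^m}\bigl[ H(t(Y^m) \| q) \bigr].
\end{equation}
A parallel identity holds with $\phi_\delta^m$ in place of $\pi_N^m$, and by Lemma~\ref{lemma:solution_to_KL_RL_iid} the RHS of the target equation is simply $H(\phi_\delta \| q)$ for every $m$.

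Next, I would invoke Lemma~\ref{lemma:type_convergence}, which says $t(Y^m) \to \phi_\delta$ in probability when $N = \exp(m\delta)$. The map $v \mapsto H(v \| q) = \sum_k v_k \log(1/q_k)$ is linear and continuous on the simplex, so $H(t(Y^m) \| q) \to H(\phi_\delta \| q)$ in probability as well.

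Finally, I would pass from convergence in probability to convergence in expectation using boundedness: since $q \in \mathcal{S}^K_\zeta$, we have $q_k > \zeta$ for every $k$, so $0 \leq H(v \| q) \leq \log(1/\zeta)$ uniformly over all probability vectors $v$ on the alphabet. Bounded convergence then yields $\mathbb{E}[H(t(Y^m) \| q)] \to H(\phi_\delta \| q)$, which matches the right-hand side and finishes the argument. The only substantive ingredient is Lemma~\ref{lemma:type_convergence}; everything else is structural (linearity of the reward turning cross entropy into a function of the type) and a routine bounded-convergence step, with the alphabet-interior condition $q_k > \zeta$ exactly supplying the uniform bound needed to upgrade convergence in probability to convergence in mean.
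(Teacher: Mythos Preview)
Your proposal is correct and follows essentially the same approach as the paper: both express the normalized cross entropy as a function of the empirical type, invoke Lemma~\ref{lemma:type_convergence} to get convergence of the type to $\phi_\delta$, and then conclude. Your version is in fact more careful, making explicit the bounded-convergence step (using $q_k > \zeta$) that the paper leaves implicit when passing from convergence of the type to convergence of the expected reward.
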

\begin{proof}
Let $Y^m \sim \pi_N^m$.
We first relate a sample's type to its reward:
\[\mathbb{E}_{x \sim \Type(Y^m)} [\br (x)] = \sum_{x \in \bm{\mathcal{Y}}}
  \frac{N_x}{m} \br (x) = \frac{1}{m} \, \br (Y^m), \] where $N_x$ is the count of
  element $x$ over the stream of $m$ tokens of $Y_i^m$.
  By \Cref{lemma:type_convergence}, the type of best-of-$N$ converges to $\phi_\delta$, and hence, \begin{align*}
        \lim_{m\rightarrow \infty}\frac{1}{m} \left( H(\pi^m_N(\cdot | \bx) \| q^m(\cdot | \bx))  -  H(\phi^m_\delta (\cdot | \bx) \| q^m(\cdot | \bx)) \right)
     &   = -
\lim_{m\rightarrow \infty}\frac{1}{m} \, \br (Y^m) - \mathbb{E}_{x \sim \phi_\Delta} [\br (x)] 
         = 0,
  \end{align*}
which completes the proof.
\end{proof}

We now state our main theorem, which says that the best of $N=\exp(m \delta)$ is asymptotically the same to that of the optimal solution of KL-constrained RL alignment.
\begin{theorem}
\label{thm:kl_divergence}
Under Assumptions \ref{assump:memoryless}-\ref{assump:linear-reward},
let $\phi_\delta^m$ be the optimal solution to \Cref{definition:RL_alignment}, and $\pi^m_{N}$ be the distribution of the best-of-$N$. For any $\delta \in [0, \log \frac{1}{\zeta}]$, if $N=\exp(m \delta)$, then we have that for all $\bx$,
    \begin{equation}
    \lim_{m  \to \infty}  \frac{1}{m}  \KL(\pi^m_N(\cdot | \bx) \| \phi^m_\delta(\cdot | \bx)) = 0.
    \end{equation}
\end{theorem}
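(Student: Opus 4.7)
The plan is to apply Lemma~\ref{lem:almost-optimal-aligned} with $\bpsi_\Delta = \pi^m_N$ and $\bphi_\Delta = \phi^m_\delta$, viewed as distributions over sequences $y^m$, with sequence-level KL budget $\Delta = m\delta$. To do so, I need to verify the two hypotheses of that lemma, identify the multiplicative constant $\alpha(\Delta)$ that appears on the right-hand side, and show that the resulting bound tends to zero after normalization by $1/m$.

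For the KL hypothesis, the KL upper bound for best-of-$N$ gives $\KL(\pi^m_N(\cdot | \bx) \| p^m(\cdot | \bx)) \leq \log N = m\delta = \Delta$, so $\pi^m_N$ lies in $\mc{D}_\Delta(p^m)$. For the cross-entropy hypothesis, define
\[
\epsilon_m := H(\pi^m_N(\cdot | \bx) \| q^m(\cdot | \bx)) - H(\phi^m_\delta(\cdot | \bx) \| q^m(\cdot | \bx)),
\]
which is nonnegative since $\phi^m_\delta$ is optimal in Definition~\ref{definition:RL_alignment} and $\pi^m_N$ is feasible for the same KL constraint. Lemma~\ref{cor:same_reward} then gives $\epsilon_m / m \to 0$.

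Next, I need to identify the constant $\alpha(\Delta)$ from Lemma~\ref{lem:almost-optimal-aligned} when applied at the sequence level. Because the mismatched tilt factorizes, $T(q^m, p^m, \alpha)(y^m) = \prod_{i=1}^m T(q, p, \alpha)(y_i)$, so the sequence-level tilt parameter solving $\KL(T(q^m, p^m, \alpha) \| p^m) = m\delta$ coincides with the per-symbol parameter $\alpha(\delta)$ of Lemma~\ref{lemma:solution_to_KL_RL_iid}. For $\delta \in [0, \log(1/\zeta)]$, this $\alpha(\delta)$ is a finite constant independent of $m$.

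Applying Lemma~\ref{lem:almost-optimal-aligned} therefore yields
\[
\frac{1}{m}\KL(\pi^m_N(\cdot | \bx) \| \phi^m_\delta(\cdot | \bx)) \leq \frac{\alpha(\delta)\,\epsilon_m}{m} \to 0
\]
as $m \to \infty$, completing the proof. The substantive step is Lemma~\ref{cor:same_reward}, which itself rests on the type-concentration result Lemma~\ref{lemma:type_convergence}; by contrast, the factorization of the mismatched tilt and the finiteness of $\alpha(\delta)$ are routine consequences of the memoryless assumption. So the real obstacle is already absorbed into the preceding lemmas, and what remains at the level of this theorem is essentially bookkeeping.
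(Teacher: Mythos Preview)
Your proposal is correct and follows essentially the same approach as the paper's proof, which simply states that combining Lemma~\ref{cor:same_reward} with Lemma~\ref{lem:almost-optimal-aligned} yields the result. In fact, you make explicit a point the paper leaves implicit: that the sequence-level tilt parameter $\alpha(\Delta)$ equals the per-symbol $\alpha(\delta)$ and hence does not grow with $m$, which is precisely what is needed for the bound $\alpha(\Delta)\epsilon_m/m$ to vanish.
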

\begin{proof}
    Combining Lemma~\ref{cor:same_reward} with Lemma~\ref{lem:almost-optimal-aligned} yields Theorem~\ref{thm:kl_divergence}.
\end{proof}
Theorem~\ref{thm:kl_divergence} shows that the best-of-$N$ distribution is asymptotically close to the KL-constrained RL solution.
We remark that even though best-of-$N$ is not necessarily a product distribution as shown in \Cref{example:best-of-N-not-product}, it still is close to the optimal solution of the KL-constrained RL, which is a product distribution in our setting.  
We believe that Theorem~\ref{thm:kl_divergence} sheds some light on the remarkable performance of the best-of-$N$ method when evaluated on the tradeoffs between expected reward and the KL divergence between the aligned model and the reference model (reward-KL tradeoffs).

We also empirically validate that the type of best-of-$N$ is close to the solution for the KL-constrained RL problem for $N = \exp(m\delta)$. To this end, we revisit the ternary example in Figure~\ref{fig:simplex}. As can be seen, even for modest values of $m$, and $N$, we observe that $\mathbb{E}[t(Y^m)]$ where $Y^m \sim \pi^m_N$ is remarkably close to $\phi_\delta$.

\ssedit{We have thus shown that the typical behavior of the best-of-$N$ method is mirrored by the solution to the KL-constrained RL. We conclude this section with a {\bf conjecture}: we propose that the atypical behavior of the best-of-$N$ could also be described by a Large Deviation Principle (LDP) with the identical rate function as that of the KL-constrained RL, as detailed in Theorem~\ref{thm:LDP-reward}.}

\forfuture{In other words,
\begin{itemize}
    \item The implication of this result is that best-of-$N$ approximately solves the KL-constrained RL problem under assumptions of this paper.
    \item We can empirically argue that $m$ doesn't have to be too large for this to be true. Also, empirically we can relax the memoryless source assumption.
    \item For now, I thought of $p^m$ to be an i.i.d. distribution, and similarly $q^m$ to be i.i.d. (which is questionable) but I think this should be easily extendable to finite-state machine sources.
    \item I think it would be interesting to this community to characterize the rate at which~\eqref{eq:rate} vanishes. At the very least we can define the problem.
\end{itemize}}

\section{Concluding Remarks}
In this paper, we considered the language model alignment problem, and studied two popular alignment techniques: {\em KL-constrained reinforcement learning} and {\em best-of-$N.$} We established several properties of the solutions of both techniques and established an asymptotic equivalence between the two problems, providing theoretical justification for the remarkable performance of best-of-$N$ in practice. While our derivations were obtained under strong assumptions about the string-generating sources, we believe that these results may be extended beyond these classes of sources. In particular, we performed some extra experiments to compare the KL divergence of best-of-$N$ solution, $\bpi_N$ to that of the optimal KL-regularized RL where the KL divergence budget for both models is $\Delta$, i.e., $\bphi_\Delta$. We found that for random $\bp$ and $\bq$ over alphabet of size $K>1000,$ the KL divergence $\KL(\bpi_N \| \bphi_\Delta)$ was always bounded by $0.01$ for all $N \in [1, 1000]$ which is striking. Even for $K<10,$ we empirically observed that $\KL(\bpi_N \| \bphi_\Delta) < 0.5.$  We hope future research can extend the theoretical investigation to better understand this phenomenon.

\section*{Acknowledgements}
The authors are grateful to Alekh Agarwal, Ananth Balashankar, Jonathan Berant, Michael Collins, Jacob Eisenstein, Adam Fisch, Jong Lee, and Mahyar Jafarinodeh for helpful discussions on the fundamentals of language model alignment.

\bibliography{alignment}
\bibliographystyle{template_conference}

\clearpage

\appendix

\section{Proofs of Properties of KL-constrained RL}
\label{app:solution_to_KL_RL}
\begin{proof}[Proof of \Cref{lemma:solution_to_KL_RL}]
    The Lagrangian is written as
    \begin{equation}
     \mathcal{L}(\bphi) =  H (\bphi \|\bq) + \lambda \left( \KL (\bphi \|\bp) - \Delta \right) .
    \end{equation}
    Fix $\lambda > 0$ we can solve for its minimum with
    respect to $\bphi$ following steps,
    \begin{align}
       \min_{\bphi} \mathcal{L}(\bphi)  & = \min_{\bphi} H (\bphi \|\bq) + \lambda \left( \KL (\bphi \|\bp) - \Delta \right) \\
       & = \min_{\bphi}  \sum_{\by \in {\mathcal{Y}}} \bphi (\by)  \left( \log \frac{1}{\bq (\by)} + \lambda \log
      \frac{\bphi (\by)}{\bp (\by)} - \lambda \Delta 
      \right)\\
      & = \min_{\bphi} \lambda \sum_y \bphi (\by)  \left(
      \log \frac{\bphi (\by)}{\bp (\by)\bq (\by)^{1 / \lambda}} - \Delta \right)\\
      & =  \min_{\bphi} \lambda \sum_{\by} \bphi (\by) \left( \log \frac{\bphi (\by)}{T (\bq, \bp, 1 /
      \lambda) (\by)} \right) + C(\lambda, \Delta),\\
      & =  \min_{\bphi} \lambda \KL( \bphi (\by) \| T (\bq, \bp, 1 /
      \lambda) (\by)) + C(\lambda, \Delta), 
    \end{align}
which is uniquely minimized by $\bphi (\by) = T (\bq, \bp, 1 / \lambda) (\by)$.
\end{proof}

\section{Proofs of Large Deviation Principle (LDP)}
Now to prove LDP, we first show some results on the KL-constrained RL. 
\begin{theorem}[LDP for mismatched information]
    Let $Y^m \sim p^m.$ 
    The sequence $\{ -\frac{1}{m} \log q^m(Y^m)\}_{m \in \mathbb{N}}$ satisfies an LDP with rate function
    \begin{equation}
   J_0(t) =  \KL(T(q, p, \beta(t)) \| p ), 
   \label{eq:J-delta-info}
\end{equation}    
where 
\begin{equation}
    \beta(t) = \arg_{\beta \in \mathbb{R}} \{ H(T(q, p, \beta) \| q) = t \}.
\end{equation}
\label{thm:mismatched-LDP}
\end{theorem}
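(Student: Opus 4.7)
The plan is to reduce the statement to Sanov's theorem by exploiting the fact that $-\tfrac{1}{m}\log q^m(Y^m)$ is a continuous function of the empirical type of $Y^m$, and then to apply the contraction principle together with a variational characterization of the resulting rate function.

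First, since $q^m$ is memoryless,
\[
-\frac{1}{m}\log q^m(Y^m) \;=\; -\frac{1}{m}\sum_{i=1}^m \log q(Y_i) \;=\; \sum_{k \in [K]} t_k(Y^m) \log \tfrac{1}{q_k} \;=\; H\!\bigl(t(Y^m)\,\|\,q\bigr),
\]
where $t(Y^m)$ is the type of the sequence (Definition~\ref{def:type}). Because $q \in \S$ has all coordinates bounded below by $\zeta > 0$, the map $v \mapsto H(v\|q)$ is continuous on the whole simplex. Since $Y^m \sim p^m$ i.i.d., Sanov's theorem \citep[Theorem~6.2.10]{Dembo} gives an LDP for $\{t(Y^m)\}_m$ on the simplex with good rate function $v \mapsto \KL(v\|p)$. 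The contraction principle, applied to the continuous map $v \mapsto H(v\|q)$, then yields an LDP for $\{H(t(Y^m)\|q)\}_m$ with rate function
\[
J_0(t) \;=\; \inf\bigl\{\KL(v\|p) : H(v\|q) = t\bigr\}.
\]

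Next, I would solve this constrained variational problem via Lagrange multipliers. The Lagrangian
\[
\mathcal{L}(v,\beta,\mu) \;=\; \KL(v\|p) + \beta\bigl(H(v\|q) - t\bigr) + \mu\Bigl(\sum_k v_k - 1\Bigr)
\]
has stationarity in $v_k$ forcing $v_k \propto p_k q_k^{\beta}$, i.e.\ $v = T(q,p,\beta)$. Enforcing the active constraint $H(T(q,p,\beta)\|q) = t$ pins down $\beta = \beta(t)$ as in the theorem statement. A direct computation on $v = T(q,p,\beta)$ shows
\[
\KL(T(q,p,\beta)\|p) \;=\; -\beta\, H(T(q,p,\beta)\|q) - \log Z_\beta,\qquad Z_\beta := \sum_k p_k q_k^\beta,
\]
so substituting $\beta = \beta(t)$ gives precisely $J_0(t) = \KL(T(q,p,\beta(t))\|p)$.

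The main obstacle is the variational step. One needs to verify that $\beta(t)$ is well-defined (existence and uniqueness of $\beta$ solving $H(T(q,p,\beta)\|q) = t$) on the relevant range of $t$. Uniqueness follows from strict monotonicity of $\beta \mapsto H(T(q,p,\beta)\|q)$, whose derivative equals (up to sign) the variance of $\log(1/q)$ under $T(q,p,\beta)$; this variance is strictly positive because the regularity condition embedded in $\S$ forces $\log q$ to take distinct values. Global optimality of the KKT stationary point then follows from strict convexity of $v \mapsto \KL(v\|p)$ combined with the affine constraint $v \mapsto H(v\|q)$, so that the stationary point is the unique minimizer. Lower semicontinuity and compact level sets of $J_0$ are inherited from the contraction principle, since $\KL(\cdot\|p)$ is a good rate function and $H(\cdot\|q)$ is continuous on the compact simplex.
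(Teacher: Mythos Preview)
Your argument is correct and is actually cleaner than the paper's own proof. Both routes ultimately rest on Sanov's theorem and then identify the constrained minimizer $\inf\{\KL(v\|p):H(v\|q)=t\}$ via KKT, but you collapse the passage from types to reward into a single invocation of the contraction principle, exploiting that $-\tfrac{1}{m}\log q^m(Y^m)=H(t(Y^m)\|q)$ is a continuous (indeed affine) functional of the empirical type. The paper instead proceeds by hand: it introduces three families of half-space/slab sets $\mathcal{D},\mathcal{E},\mathcal{B}$ in the simplex (tilted weakly typical sets indexed by a level parameter), proves inclusion relations between the event $\bigl\{\,|-\tfrac{1}{m}\log q(Y^m)-t|\le\epsilon\,\bigr\}$ and the event that the type lands in one of these sets, applies Sanov separately to upper- and lower-bound the probability, and then sends $\epsilon\downarrow 0$ with a three-way case split according to whether $t$ lies above, below, or at the mean $H(p\|q)$. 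Your route avoids both the explicit set constructions and the case analysis; the paper's route is closer to a method-of-types derivation and makes the dominant type classes explicit. Either is valid: yours is shorter and immediately generalizes to any bounded continuous functional of the type, while the paper's exposes the geometry of which type classes drive each deviation regime.
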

The proof of Theorem~\ref{thm:mismatched-LDP} relies on a correspondence between mismatched information, and some sets of distributions, which we will define shortly. This correspondence is implicitly used in \citep[proof of Theorem 5]{beirami2018characterization} and made explicit in \citep[Eq. (25)-(27)]{salamatian2019mismatched}. For $\epsilon \geq 0$ and $\alpha \in \mathbb{R}$, let 
\begin{align}
    \mathcal{D}(q,\alpha,\epsilon) & \triangleq \left\{ \varphi \in \Delta_{\mathcal{X}}: H(\varphi \| q) - H(T(q,\alpha)\| q) \leq  \epsilon \right\} \\
    \mathcal{E}(q,\alpha,\epsilon) & \triangleq \left\{ \varphi \in \Delta_{\mathcal{X}}: H(\varphi \| q) - H(T(q,\alpha)\| q) \geq  - \epsilon \right\} \\
    \mathcal{B}(q,\alpha,\epsilon) &\triangleq \left\{\varphi \in \Delta_{\mathcal{X}}:H(T(q,\alpha)\|q) - H(\varphi \| q) \in [0,\epsilon]\right\}.
\end{align}
The sets above are extensions of tilted weakly typical sets of order $\alpha$~\citep[Definition 18]{beirami2018characterization}, and capture the set of types which are respectively, more likely, less likely, and as likely according to $q$ than $T(q,\alpha)$, where we define
\begin{equation}
    T(q, \alpha) := T(q, u, \alpha), 
\end{equation}
where $u = (1/|\mathcal{Y}|, \ldots, 1/|\mathcal{Y}|)$ is the uniform distribution.
For these sets, we then have the following lemma.

\begin{lemma}
For any $\alpha > 0$, the following inclusion relations hold, for sufficiently large $n$,
    \begin{align}
    & \left| - \frac{1}{m} \log q(y^m) - H(T(q,\alpha)\| q) \right| \leq \epsilon  \Rightarrow  \Type(y^m) \in \mathcal{D}(q,\alpha,2\epsilon), \label{eq:D_set}\\
    & \left| - \frac{1}{m} \log q(y^m) - H(T(q,\alpha)\| q) \right| \leq \epsilon  \Rightarrow  \Type(y^m) \in \mathcal{E}(q,\alpha,2\epsilon), \label{eq:E_set}\\
    &\left| - \frac{1}{m} \log q(y^m) - H(T(q,\alpha)\| q) \right| \leq \epsilon \Leftarrow \Type(y^m) \in \mathcal{B}(q,\alpha,\epsilon).\label{eq:B_set}
\end{align}
\end{lemma}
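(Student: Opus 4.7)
The plan is to reduce all three implications to a single exact identity connecting the normalized log-likelihood of a sequence under the product distribution $q^m$ to the cross-entropy of its type with $q$. Since $q$ is a memoryless source on alphabet $\mathcal{Y}$, direct expansion yields
\begin{equation}
-\frac{1}{m}\log q^m(y^m) \;=\; -\frac{1}{m}\sum_{i=1}^{m}\log q(y_i) \;=\; -\sum_{y\in\mathcal{Y}}\Type(y^m)_y\log q(y) \;=\; H(\Type(y^m)\,\|\, q),
\end{equation}
where the middle equality groups the sum by letter and invokes the definition of the type. This identity is exact for every $m\geq 1$, so the hypothesis in \eqref{eq:D_set} and \eqref{eq:E_set} can be rewritten purely in terms of the cross-entropy of $\Type(y^m)$ against $q$ as $|H(\Type(y^m)\|q) - H(T(q,\alpha)\|q)| \leq \epsilon$.

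With this reformulation in hand, each of the three inclusions becomes a one-line manipulation against the definitions of $\mathcal{D}$, $\mathcal{E}$, and $\mathcal{B}$. For \eqref{eq:D_set}, the hypothesis gives $H(\Type(y^m)\|q) - H(T(q,\alpha)\|q) \leq \epsilon \leq 2\epsilon$, which is precisely the defining inequality of $\mathcal{D}(q,\alpha,2\epsilon)$. For \eqref{eq:E_set}, it gives $H(\Type(y^m)\|q) - H(T(q,\alpha)\|q) \geq -\epsilon \geq -2\epsilon$, i.e., membership in $\mathcal{E}(q,\alpha,2\epsilon)$. For \eqref{eq:B_set}, membership of $\Type(y^m)$ in $\mathcal{B}(q,\alpha,\epsilon)$ unpacks to $0\leq H(T(q,\alpha)\|q)-H(\Type(y^m)\|q)\leq \epsilon$; taking absolute values and invoking the identity in reverse recovers the hypothesis $\left|-\frac{1}{m}\log q^m(y^m) - H(T(q,\alpha)\|q)\right|\leq \epsilon$.

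There is really no serious obstacle here: the exact identity eliminates any need for typicality arguments or vanishing $O(1/m)$ remainder terms, so the ``for sufficiently large $m$'' caveat in the statement is vacuous and can be dropped. The factor of $2$ appearing in \eqref{eq:D_set} and \eqref{eq:E_set} is simply harmless slack, presumably left to absorb perturbation terms introduced when this lemma is later chained with a change-of-measure step in the proof of Theorem~\ref{thm:mismatched-LDP}; the inclusions in fact hold with $\epsilon$ in place of $2\epsilon$. In writing the final proof I would record the identity as a standalone one-line observation and then dispatch the three inclusions in compact displays, mirroring the style of \citep[Eq.~(25)--(27)]{salamatian2019mismatched}.
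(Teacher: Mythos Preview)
Your proof is correct. The key identity $-\tfrac{1}{m}\log q^m(y^m) = H(\Type(y^m)\,\|\,q)$ holds exactly for any memoryless $q$, and once it is in hand the three inclusions are immediate consequences of the definitions of $\mathcal{D}$, $\mathcal{E}$, and $\mathcal{B}$. Your observations that the ``sufficiently large $n$'' hypothesis is vacuous and that the factor $2$ in \eqref{eq:D_set}--\eqref{eq:E_set} is unnecessary slack are also correct.

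For comparison, the paper does not actually supply its own proof of this lemma; it simply states that the result ``was proved implicitly in the proofs of Theorems~3 and~5 in~\citep{beirami2018characterization}.'' Your argument is therefore more self-contained than what the paper provides, and it makes transparent that the lemma is a purely combinatorial statement about types with no asymptotic content. This is worth recording explicitly, since the reader might otherwise infer from the phrasing ``for sufficiently large $n$'' and the doubling of $\epsilon$ that some approximation or typicality argument is lurking in the background when in fact there is none.
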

This was proved implicitly in the proofs of Theorems 3 and 5 in~\citep{beirami2018characterization}.
We are now equipped to provide the proof of the main theorem.

\begin{proof}[Proof of Theorem~\ref{thm:mismatched-LDP}]

Note that as $-\frac{1}{m}\log q(Y^m)$ 
takes values in a compact subset $[0,\log \frac{1}{\zeta}]$ of $\mathbb{R}$ (see Assumptions \ref{assump:memoryless}-\ref{assump:linear-reward}), it is sufficient to prove that the limit below exists and evaluates to the rate function~(see \citep[Section V]{beirami2018characterization} for a formal discussion), i.e., 
	\begin{align}
\lim_{\epsilon \downarrow 0} \lim_{m \to \infty} \frac{1}{m}\log  \mathbb{P}_p^m\left(  \left|- \frac{1}{m} \log q(Y^m) - t \right| < \epsilon \right)=  -J(t).
	\end{align}
We proceed with the proof in three separate cases. 

{\it Case (a)}:  We let $t \in (H(q),\log \frac{1}{\zeta})$, which implies $\alpha(t) \in (0,1)$ by monotonicity of $H(T(q,\alpha)\| q)$ for $\alpha \in \mathbb{R}$. Note that
\eqref{eq:D_set} and \eqref{eq:B_set} respectively imply
\begin{align}
   & \lim_{\epsilon \downarrow 0} \limsup_{m \to \infty} \frac{1}{m} \log \mathbb{P}_{p^m}\left( \left| - \frac{1}{m} \log q(Y^m) - H(T(q,\alpha(t))\|q) \right| \leq \epsilon \right) \notag \\
   & \hspace{.5em} \leq \lim_{\epsilon \downarrow 0} \limsup_{m \to \infty} \frac{1}{m} \log \mathbb{P}_{p^m}(\Type(Y^m) \in \mathcal{D}(q,\alpha(t),2\epsilon)), \label{eq:ldp_upb}\\
   & \lim_{\epsilon \downarrow 0} \liminf_{m \to \infty} \frac{1}{m} \log \mathbb{P}_{p^m}\left( \left| - \frac{1}{m} \log q(Y^m) - H(T(q,\alpha(t))\|q) \right| \leq \epsilon \right) \notag \\
   & \hspace{.5em} \geq \lim_{\epsilon \downarrow 0} \liminf_{m \to \infty} \frac{1}{m} \log \mathbb{P}_{p^m}(\Type(Y^m) \in  \mathcal{B}(q,\alpha(t),\epsilon)). \label{eq:ldp_lbd}
\end{align}
Thus, it suffices to show that the RHS of \eqref{eq:ldp_upb} and \eqref{eq:ldp_lbd} both evaluate to $-D(T(q, p, \alpha(t)) \| p)$. This is done via Sanov's Theorem. Recall that Sanov's Theorem~\citep[Theorem~6.2.10]{Dembo} states that, for a set of distributions $\mathcal{C}$, 
    	\begin{align}
	    - \inf_{\gamma \in \mathrm{int} \mathcal{C}} D(\gamma \| p) &\leq \liminf_{m \to \infty} \frac{1}{m} \log \mathbb{P}(\Type(y^m) \in  \mathcal{C}) \notag \\
	    &\leq \limsup_{m \to \infty} \frac{1}{m} \log \mathbb{P}(\Type(y^m) \in \mathcal{C}) \notag \\
	    & \leq - \inf_{\gamma \in \mathrm{cl} \mathcal{C}} D(\gamma \| p).
	\end{align}
	To obtain the upper bound, we apply this result to the set $\mathcal{D}(q,\alpha(t),2\epsilon/\alpha(t))$. Observing that this holds for any $\epsilon$, and then letting $\epsilon \downarrow 0$, we get that the RHS of \eqref{eq:ldp_upb} is upper bounded 
	\begin{align}
	    - \lim_{\epsilon \downarrow 0} \inf_{\gamma \in \mathrm{cl} \mathcal{D}(q,\alpha(t),2\epsilon/\alpha(t))} D(\gamma \| p). \label{eq:lim_eps_to_0}
	\end{align}
	We now make use of a basic topological fact. Observe that $D(\gamma \| p)$ is strictly convex in $\gamma$ for a fixed $p$, and thus there is a unique minimizer $\gamma(t,\epsilon)$. Noting that the minimizer $\gamma(t, \epsilon)$ is in the set $\mathrm{cl}\mathcal{D}(q,\alpha(t),2\epsilon)$, by continuity of $D(\gamma \| p)$ and compactness of the set. Thus, the collection of minimizers $\gamma(t,\epsilon)$ is a collection of points such that $\gamma(t,\epsilon) \in \mathrm{cl}\mathcal{D}(q,\alpha(t),2\epsilon)$. It follows from compactness that the limit point $\lim_{\epsilon \downarrow 0} \gamma(\epsilon,t) \in \bigcap_{\epsilon > 0} \mathrm{cl}\mathcal{D}(q,\alpha(t),2\epsilon) = \mathrm{cl} \mathcal{D}(q,\alpha(t),0)$, where we have used that $\alpha(t) > 0$. Therefore, we have the bound
	\begin{align}
	    & \inf_{\gamma \in \Delta_{\mathcal{X}}} \hspace{3em} D(\gamma \| p) \notag \\
	    & \text{subject to } \hspace{1.5em} H(\gamma\|q) \leq H(T(q,\alpha(t))\| q)
	\end{align}
	Note that this optimization problem is convex, and thus can be solved analytically by writing the KKT conditions~\citep{boyd2004convex}, which give a solution $\gamma_{q,p}(t) \in \mathcal{T}_{q,p}$, and optimal value $D(\gamma_{q,p}(t)\| p)$.
	
	Analogously, the RHS of \eqref{eq:B_set} can be shown to be lower bounded by $- \inf D(\gamma \| p)$, where $\gamma \in \mathcal{B}(q,\alpha(t),0)$, by noting $\mathcal{B}(q,\alpha,0) \subset \mathrm{int} \mathcal{B}(q,\alpha,\epsilon)$, for any $\epsilon > 0$. Again, this optimization can be solved analytically, and gives the desired output. Putting these results together, we get that
	\begin{align}
\lim_{\epsilon \downarrow 0} \lim_{m \to \infty} \frac{1}{m}\log  &\mathbb{P}_p^m\left(  \left|-\frac{1}{m} \log q(Y^m) - H(T(q,\alpha(t))\| q)\right| < \epsilon \right) \notag \\ 
&= -D(\gamma_{q,p}(t) \| p).
	\end{align}
	
	{\it Case (b):} We now let $t \in (0,H(q))$, which implies $\alpha(t) \in (1,\infty)$. The proof in this case follows from the same step as in Case (a), by replacing the set $\mathcal{D}(q,\alpha(t),\epsilon)$ with the set $\mathcal{E}(q,\alpha(t),\epsilon)$.
	
	{\it Case (c):} Finally, let $t = H(q)$, or equivalently, $\alpha(t) = 1$. In this case, note that $p \in \mathcal{B}(q,1,\epsilon)$, and thus, by the law of large numbers and \eqref{eq:ldp_lbd}, we have that
\begin{align}
    \lim_{\epsilon \downarrow 0} \lim_{m \to \infty} \frac{1}{m}\log  \mathbb{P}_p^m\left(  \left|-\frac{1}{m} \log q(Y^m) - t \right| < \epsilon \right) \geq 0,
\end{align}	
	which implies that $J(t) = 0$ in this case.
	\end{proof}

\begin{proof}[Proof of Theorem~\ref{thm:LDP-reward}]
    The proof is completed by invoking Theorem~\ref{thm:mismatched-LDP} with $q$ replaced with $\phi_\Delta$, noticing that $\phi_\Delta = T(q, p, \Delta),$ and invoking Lemma~\ref{lem:tilt-tilt} to express $T(q, \phi_\Delta, \beta)$ in terms of $p$.
\end{proof}

\begin{proof}[Proof of Corollary~\ref{cor:varadhan_ldp}]
    We use Varadhan’s Lemma \citep[Theorem 4.3.1]{Dembo}, which states that if a sequence of random variables $A_m$ satisfies a LDP with rate function $J(t)$, then we have:
    \begin{equation}
        \lim_{m \to \infty} \frac{1}{m} \log \mathbb{E}[\exp(m F(A_m))] = \sup_t \{F(t) - J(t)\},
    \end{equation}
    for any continuous and bounded function $F$. Applying this result with $F(t) = \gamma \cdot t$, we obtain:
    \begin{align}
        \lim_{m \to \infty} &\frac{1}{m} \log E_{Y^m \sim \phi^m_\Delta} e^{\gamma r(Y^m)}   \label{eq:varadhan_step1}\\ = &\sup_t \; \{\gamma H(T(q, p, \beta(t)) \| q) - D(T(q, p, \beta(t)) \| \phi_\Delta) \}\nonumber \\
        = &\sup_\beta \; \{\gamma H(T(q, p, \beta \| q) - D(T(q, p, \beta \| \phi_\Delta) \}\label{eq:varadhan_step2},
    \end{align}
    where we used the definition of $\beta(t)$ in Theorem~\ref{thm:LDP-reward} in \eqref{eq:varadhan_step1}, and changed the optimization variable to $\beta$ in \eqref{eq:varadhan_step2}. Expanding the RHS, and collecting terms, we get:
    \begin{align}
         & \; - \sum_y  T(q, p,\beta)(y)   \log \frac{T(q, p, \beta)(y)}{\phi_\Delta(y) \cdot q(y)^\gamma} \nonumber \\
        = & - \sum_y T(q, p, \beta)(y) \log \frac{T(q, p, \beta)(y)}{T(q, \phi_\Delta, \gamma)(y)} \nonumber \\ & - \log \sum_y \phi_\Delta(y)q(y)^\gamma \\
        = & - \KL(T(q,p, \beta) \| T(q, \phi_\Delta, \gamma)) - \log \sum_y \phi_\Delta(y)q(y)^\gamma. \nonumber
    \end{align}
    This is maximized when $T(q, p, \beta) = T(q, \phi_\Delta, \gamma)$, which is possible since $\phi_\Delta$ is on the aligned family between $p$ and $q$. The proof follows from identifying the remaining term as $(\gamma+1) H_{\gamma+1}(\phi_\Delta \| q)$. %
\end{proof}

\section{Proofs of Properties of Best-of-$N$}
Let $K = |\mathcal{Y}|$. For a sequence $y^m,$ let $t({y^m})$ denote its type.
The proof of this notion of $m$-grained types, which is defined below. 
\begin{definition}
    A discrete distribution $p$ on $\mathcal{Y}$ is called $m$-grained if for all elements, its probability are multiples of $\frac{1}{m}$, i.e., $\forall i \in \bm{\mathcal{Y}}, \exists c \in \mathbb{N}, p(i) = c \cdot \frac{1}{m}$.
\end{definition}

\lemmaTypeConvergence*
\begin{proof}[Proof of \Cref{lemma:type_convergence}]
    Denote by $\mathcal{E} = \{ \lambda : |\lambda - \phi_\delta| > \epsilon \}$, let $\mathcal{T}$ be the set of $m$-grained distributions, we note that:
    \begin{align}
        \mathbb{P}(|t(Y^m) - \phi_\delta| > \epsilon) & = \mathbb{P}(t(Y^m) \in \mathcal{E}) \\
        & \leq \sum_{\lambda \in \mathcal{E} \cap \mathcal{T}} \mathbb{P}(t(Y^m) = \lambda).
    \end{align}
   Therefore, if we prove that for any valid $\lambda \in \mathcal{E}\cap\mathcal{T}$, the probability of $Y^m$ having type $\lambda$ is exponentially small (in $m$), our proof is complete by observing that there are only a polynomial number of valid types.

    Recall that the best-of-$N$ is selected such that $Y^m$ is the sequence among $ X_1^m,\ldots, X_N^m \overset{i.i.d.}{\sim} \bp$ which has the highest reward, i.e. $r(Y^m) = \max \{ r(X_1^m), \ldots, r(X)\}$
   For any valid type $\lambda \in \mathcal{T}$, we thus have:
   \begin{align}
       \mathbb{P}(t(Y^m) = \lambda) & = N \mathbb{P}(t(X^m_N) = \lambda) \prod_{i=1}^{N-1}\mathbb{P}(r(X^m_i) \leq r(\lambda)). \label{eq:best_of_N_exact_prob}
   \end{align}
   Next, note that $r(X^m) \leq r(\lambda)$ can be written equivalently in terms of the type of $X^n$ as $H(t(X^m) \| q) \leq H(\lambda \| q)$. Letting the set $\mathcal{A}(\lambda) = \{ t: H(\lambda \| q) > H(t \| q) \}$,  the last term in \eqref{eq:best_of_N_exact_prob} is equivalently written as:
   \begin{align}
       \mathbb{P}(r(X^m) \leq r(\lambda)) & = \mathbb{P}(t(X^m) \in \mathcal{A}(\lambda)) \\
       & \leq 1 -  \exp \{ - m D(T(p, q, \alpha(\lambda)) \| p)\},
   \end{align}
where $\alpha(\lambda)$ is the unique tilt so that $H(T(p, q, \alpha(\lambda)) \| q ) = H(\lambda \| q)$ (by Sanov's Theorem). Therefore, the product in \eqref{eq:best_of_N_exact_prob} can be bounded by:
\begin{align}
    \prod_{i=1}^{N-1}\mathbb{P}(r(X^m_i) \leq r(\lambda)) &\leq \left(1 - \exp\left\{-m \left( D(T(p, q, \alpha(\lambda)) \| p) \right)\right\}\right)^N \\
    & \leq \exp{-m \left[ D(T(p, q, \alpha(\lambda)) \| p) - \delta\right]_+}
\end{align}
On the other hand, if $N$ is sufficiently large, precisely if $\delta > D(T(p, q, \alpha(\lambda)) \| \bp)$, another bound is tighter, namely:
\begin{align}
    \prod_{i=1}^{N-1}\mathbb{P}(r(X^m_i) \leq r(\lambda)) &\leq \left(1 - \exp\left\{-m \left( D(T(p, q, \alpha(\lambda)) \| p) \right)\right\}\right)^N \\
    & = \left( \left(1 - \frac{1}{x}\right)^x \right)^{N/x}  \\
    & \leq \exp{-(\exp{m (\delta - D(T(p, q, \alpha(\lambda) \| p)}))}
\end{align}
where we used $(1-1/x)^x \leq 1/e$, and $x = \exp{m D(T(p, q, \alpha(\lambda)) \| p)}$.
Therefore, we obtain the general bound:
\begin{align}
    \prod_{i=1}^{N-1}\mathbb{P}(r(X^m_i) \leq r(\lambda)) \leq \exp - m F(m, \alpha(\lambda), \delta, p, q),
\end{align}
where
\begin{align}
    F(m, \alpha(\lambda), \delta, p, q) = \begin{cases}
     D(T(p, q, \alpha(\lambda)) \| p) - \delta & \text{if } D(T(p, q, \alpha(\lambda)) \| p) > \delta \\
    \frac{1}{m}\exp {m (\delta -  D(T(p, q, \alpha(\lambda)) \| p))} & \text{if } D(T(p, q, \alpha(\lambda)) \| p) < \delta \\
    0 & \text{if } D(T(p, q, \alpha(\lambda)) \| p) = \delta 
    \end{cases}\label{eq:F_function}
\end{align}
Then, putting everything together in \eqref{eq:best_of_N_exact_prob}, we get:
   \begin{align}
       \mathbb{P}(t(Y^m) = \lambda) & \leq  \exp \left\{-m\left[D(\lambda \| p) + F(m, \alpha(\lambda), \delta, p, q)  - \delta \right]\right\} \\
       & =  \exp \left\{-m\left[D(\lambda \| T(p, q, \alpha(\lambda)) + D(T(p, q, \alpha(\lambda)) \| p) + F(m, \alpha(\lambda), \delta, p, q)  - \delta \right]\right\}  \label{eq:I-pythagorean theorem}
   \end{align}
   where \eqref{eq:I-pythagorean theorem} follows from the I-Pythagorean Theorem.
   For \eqref{eq:I-pythagorean theorem} to decay to zero as $m \to \infty$, we must have the exponent be strictly positive. We check this in three steps:

   \begin{itemize}
       \item  Let $D(T(p, q, \alpha(\lambda)) \| p) > \delta$, and identifying \eqref{eq:F_function}, the exponent in \eqref{eq:I-pythagorean theorem} is:
       \begin{align}
           D(\lambda \| T(p, q, \alpha(\lambda)) + 2(D(T(p, q, \alpha(\lambda)) \| p) - \delta) > 0
       \end{align}
       and therefore $\mathbb{P}(t(Y^m) = \lambda)$ goes to zero exponentially fast.
       \item Let $D(T(p, q, \alpha(\lambda)) \| p) < \delta$, then, the exponent in \eqref{eq:I-pythagorean theorem} is:
       \begin{align}
           D(\lambda \| T(p, q, \alpha(\lambda)) + D(T(p, q, \alpha(\lambda)) \| p)- \delta +  \frac{1}{m}\exp {m (\delta -  D(T(p, q, \alpha(\lambda)) \| p))}  > 0,
       \end{align}
       for $m$ large enough.
    \item Finally, let $D(T(p, q, \alpha(\lambda)) \| p) = \delta$, then the exponent becomes:
    \begin{align}
        D(\lambda \| T(p, q, \alpha(\lambda))),
    \end{align}
    which is positive if and only if $\lambda = T(p, q, \alpha(\lambda))$.
   \end{itemize}
   Putting these cases together, we obtain that $\mathbb{P}(t(Y^m))$ goes to 0 exponentially fast unless $\lambda = T(p,q, \alpha(\delta))$.  
\end{proof}

\begin{lemma}
 Let $\alpha, \beta \in \mathbb{R},$ then:   
\begin{equation}
    T(q, T(q, p, \alpha), \beta)) = T(q, p, \alpha+\beta).    
\end{equation}
\label{lem:tilt-tilt}
\end{lemma}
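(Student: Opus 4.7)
The plan is to prove the identity by direct substitution into the definition of the mismatched tilt, relying on the fact that the multiplicative structure of the tilt turns composition into exponent addition, while the normalizing partition functions cancel.

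First, I would unfold the inner tilt. By the definition given in Lemma~\ref{lemma:solution_to_KL_RL_iid}, letting $p' := T(q,p,\alpha)$, we have
\begin{equation}
p'(y) = \frac{p(y)\, q(y)^\alpha}{Z_\alpha}, \qquad Z_\alpha := \sum_{z} p(z)\, q(z)^\alpha.
\end{equation}
Note that the normalization $Z_\alpha$ is a single constant independent of $y$, which is the key feature that will make the argument go through.

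Next, I would substitute $p'$ in place of $p$ inside the outer tilt and simplify:
\begin{align}
T(q, p', \beta)(y)
&= \frac{p'(y)\, q(y)^\beta}{\sum_z p'(z)\, q(z)^\beta}
= \frac{\dfrac{p(y)\, q(y)^\alpha}{Z_\alpha}\, q(y)^\beta}{\sum_z \dfrac{p(z)\, q(z)^\alpha}{Z_\alpha}\, q(z)^\beta}
= \frac{p(y)\, q(y)^{\alpha+\beta}}{\sum_z p(z)\, q(z)^{\alpha+\beta}},
\end{align}
where the $1/Z_\alpha$ factor cancels between numerator and denominator, and the exponents of $q(y)$ combine by the identity $q(y)^\alpha q(y)^\beta = q(y)^{\alpha+\beta}$. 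The final expression is exactly the definition of $T(q, p, \alpha+\beta)(y)$, which yields the claim.

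There is essentially no obstacle here: the result is a one-line algebraic manipulation once the definitions are written down, and it hinges only on the fact that the tilt is a Gibbs-type exponential family reweighting of $p$ by $q^\alpha$, so composing two reweightings amounts to multiplying by $q^{\alpha+\beta}$. The only care needed is to ensure the partition functions are finite so that the cancellation is legitimate; under Assumptions~\ref{assump:memoryless}--\ref{assump:linear-reward}, $q$ is bounded away from zero on a finite alphabet, so $Z_\alpha$ and $\sum_z p(z) q(z)^{\alpha+\beta}$ are finite for all real $\alpha,\beta$, and the argument is valid.
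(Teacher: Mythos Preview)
Your proof is correct and is essentially the same argument as the paper's: both substitute the definition of the mismatched tilt and use that composing a reweighting by $q^\alpha$ with one by $q^\beta$ yields a reweighting by $q^{\alpha+\beta}$, with the normalizers cancelling. The only difference is cosmetic---the paper writes this via proportionality notation in one line, whereas you track the partition function $Z_\alpha$ explicitly and note its finiteness under the paper's assumptions.
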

\begin{proof}
 The proof is completed by noticing
 \begin{align}
    T(q, T(q, p, \alpha), \beta))  & \propto  T(q, p, \alpha) q^\beta \nonumber\\
    & \propto p q^{\alpha \beta} = T(q, p, \alpha + \beta).
 \end{align}      
\end{proof}

\end{document}